\definecolor{myrefcolor}{RGB}{0,20,115}
\newcommand{\cH}{\mathcal{H}}
\newcommand{\cM}{\mathcal{M}}
\newcommand{\cW}{\mathcal{W}}
\newcommand{\cD}{\mathcal{D}}
\newcommand{\cE}{\mathcal{E}}
\newcommand{\cZ}{\mathcal{Z}}
\newcommand{\cX}{\mathcal{X}}
\newcommand{\R}{\mathbf{R}}
\newcommand{\bi}{\mathbf{i}}
\newcommand{\bN}{\mathbf{N}}
\newcommand{\dd}{\mathrm{d}}
\newcommand{\bbM}{\mathbb{M}}
\newcommand{\bbF}{\mathbb{F}}
\newcommand{\Lip}{\operatorname{Lip}}
\newcommand{\norm}[1]{\| #1 \|}
\newcommand{\normc}[1]{| #1 |}
\newcommand{\iprod}[2]{\langle #1, #2 \rangle}
\newcommand{\weaks}{\overset{\ast}{\rightharpoonup}}
\newcommand{\spt}{\operatorname{spt}}
\newcommand{\LM}{\mathbf{\Lambda}}
\newtheorem{defi}{Definition}
\newtheorem{prop}{Proposition}
\newtheorem{asm}{Assumption}
\newcommand{\measurerestr}{%
  \,\raisebox{-.127ex}{\reflectbox{\rotatebox[origin=br]{-90}{$\lnot$}}}\,%
}
\icmltitlerunning{Flat Metric Minimization with Applications in Generative Modeling}
\begin{document}

\twocolumn[
\icmltitle{Flat Metric Minimization with Applications in Generative Modeling}

% It is OKAY to include author information, even for blind
% submissions: the style file will automatically remove it for you
% unless you've provided the [accepted] option to the icml2019
% package.

% List of affiliations: The first argument should be a (short)
% identifier you will use later to specify author affiliations
% Academic affiliations should list Department, University, City, Region, Country
% Industry affiliations should list Company, City, Region, Country

% You can specify symbols, otherwise they are numbered in order.
% Ideally, you should not use this facility. Affiliations will be numbered
% in order of appearance and this is the preferred way.
\icmlsetsymbol{equal}{*}

\begin{icmlauthorlist}
\icmlauthor{Thomas M\"ollenhoff}{tum}
\icmlauthor{Daniel Cremers}{tum}
\end{icmlauthorlist}

\icmlaffiliation{tum}{Department of Informatics, Technical University of Munich, Garching, Germany}

\icmlcorrespondingauthor{Thomas M\"ollenhoff}{thomas.moellenhoff@tum.de}

% You may provide any keywords that you
% find helpful for describing your paper; these are used to populate
% the "keywords" metadata in the PDF but will not be shown in the document
\icmlkeywords{Generative Adversarial Networks, Geometric Measure Theory, Representation Learning, Disentangled Representations}

\vskip 0.3in
]

% this must go after the closing bracket ] following \twocolumn[ ...

% This command actually creates the footnote in the first column
% listing the affiliations and the copyright notice.
% The command takes one argument, which is text to display at the start of the footnote.
% The \icmlEqualContribution command is standard text for equal contribution.
% Remove it (just {}) if you do not need this facility.

\printAffiliationsAndNotice{}  % leave blank if no need to mention equal contribution
%\printAffiliationsAndNotice{\icmlEqualContribution} % otherwise use the standard text.

\begin{abstract}
  We take the novel perspective to view data not 
  as a probability distribution but rather as a current. Primarily studied in
  the field of geometric measure theory, $k$-currents are continuous
  linear functionals acting on compactly supported smooth differential forms
  and can be understood as a generalized notion of oriented $k$-dimensional
  manifold. By moving from distributions (which are $0$-currents) to $k$-currents,
  we can explicitly orient the data by attaching a $k$-dimensional
  tangent plane to each sample point.
  Based on the flat metric which is a fundamental distance between currents,
  we derive FlatGAN, a formulation in the spirit of generative
  adversarial networks but generalized to $k$-currents.
  In our theoretical contribution we prove that the flat metric between a parametrized current and a
  reference current is Lipschitz continuous in the parameters.
  In experiments, we show that the proposed
  shift to $k>0$ leads to interpretable and
  disentangled latent representations which behave equivariantly to the
  specified oriented tangent planes.
\end{abstract}

\begin{figure}[t!]
  \begin{center}
      {\includegraphics[width=0.99\linewidth]{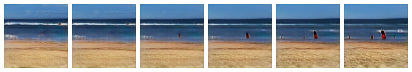}}\\[-0.14cm]
      {\includegraphics[width=\linewidth]{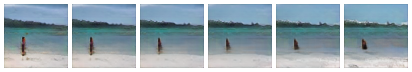}}\\[-0.14cm]
      {\includegraphics[width=\linewidth]{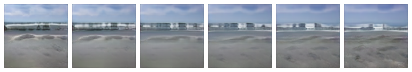}}\\[-0.14cm]
      {\includegraphics[width=\linewidth]{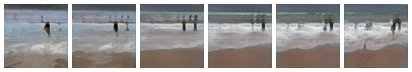}}\\
      from left to right we vary the latent code $z_1$ (time)
  \end{center}
  \caption{Discovering the arrow of time by training a generative model with
    the proposed formalism on the tinyvideos dataset \cite{videoGAN}. The approach we introduce allows one to learn latent representations 
    which behave equivariantly to specified tangent vectors (here: difference of two successive video frames). }
  \label{fig:teaser}
\end{figure}

\begin{figure*}[t!]
  \centering
  \begin{center}
    {\def\svgwidth{\linewidth} 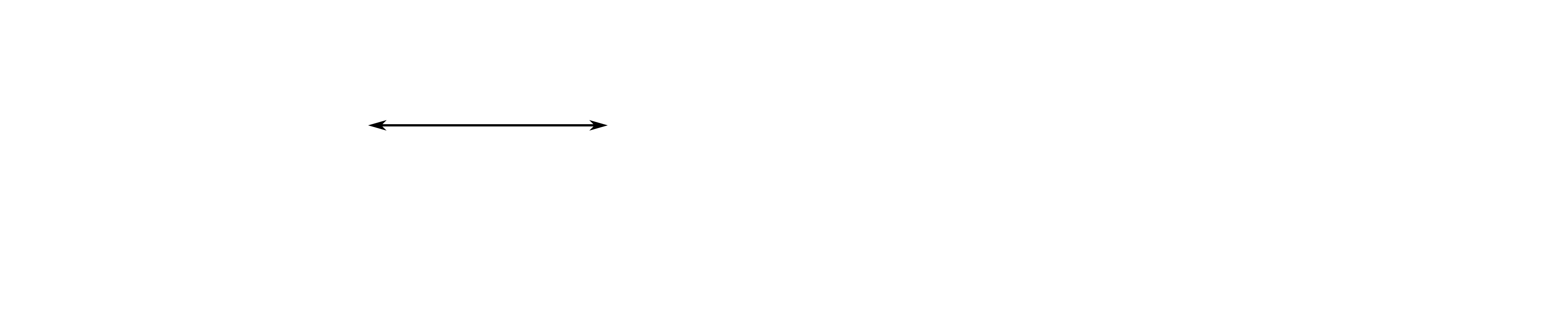}
  \end{center}
  \caption{\textbf{Illustration of the proposed idea.} We suggest the novel perspective to view observed data (here the MNIST dataset) as a $k$-current $T$, shown as the dots with attached arrows on the left. The arrows indicate the oriented tangent space, and we selected $k = 1$ to be rotational deformation. We propose to minimize the flat distance of $T$ to the \emph{pushforward} $g_\sharp S$ (shown in the middle) of a current $S$ on a low-dimensional latent space $\cZ$ (right) with respect to a ``generator'' map $g : \cZ \to \cX$. For $0$-currents (no selected tangent vectors) and sufficiently large $\lambda$, the proposed ``FlatGAN'' formulation specializes to the Wasserstein GAN \cite{ACB17}.}
  \label{fig:nutshell}
\end{figure*}

\section{Introduction}
\label{sec:intro}
This work is concerned with the problem of representation learning, which has important consequences for many tasks in artificial intelligence, cf. the work of \citet{bengio2013representation}. More specifically,
our aim is to learn representations which behave
equivariantly with respect to selected transformations of the data.
Such variations are often known beforehand and could for example describe changes in stroke width or rotation of a digit,
changes in viewpoint or lighting in a three-dimensional scene but also
the \emph{arrow of time} \cite{pickup2014seeing, wei2018learning} in time-series, describing how a video
changes from one frame to the next, see Fig.~\ref{fig:teaser}.

We tackle this problem by introducing a novel formalism
based on \emph{geometric measure theory} \cite{federer}, which we find to
be interesting in itself.
To motivate our application in generative modeling, recall
the manifold hypothesis which states that
the distribution of real-world data tends to
concentrate nearby a low-dimensional manifold, see \citet{fefferman2016testing} and the references therein.
Under that hypothesis, a possible unifying view on
prominent methods in unsupervised and representation learning such as generative adversarial networks (GANs) \cite{Goo+14} and variational auto-encoders (VAEs) \cite{KiWe14,rezende2014stochastic} is the following: both approaches aim to approximate the true distribution
concentrating near the manifold with a distribution on some low-dimensional latent space $\cZ \subset \R^l$ that is pushed through a decoder or generator $g : \cZ \to \cX$
mapping to the (high-dimensional) data space $\cX \subset \R^d$ \cite{GPC17_view,BALO17}.

We argue that treating data as a distribution potentially ignores useful available geometric information such as orientation and tangent vectors to the data manifold.
Such tangent vectors describe the aforementioned local variations or pertubations.
Therefore we postulate that
\emph{data should not be viewed as a distribution but rather as a $k$-current}. 

We postpone the definition of $k$-currents \cite{deRham} to Sec.~\ref{sec:gmt},
and informally think of them as distributions over $k$-dimensional oriented planes.
For the limiting case $k=0$, currents simply reduce to distributions in the sense of \citet{schwartz} and positive $0$-currents with unit mass are probability measures.
A seminal work in the theory of currents was written by \citet{federer1960normal}, which established
compactness theorems for subsets of currents (\emph{normal} and \emph{integral currents}).
In this paper, we will work in the space of normal $k$-currents with compact support in $\cX \subset \R^d$, denoted by $\bN_{k,\cX}(\R^d)$.

Similarly as probabilistic models build upon $f$-divergences \cite{csiszar2004information}, integral probability metrics \cite{sriperumbudur2012empirical} or more general optimal transportation
related divergences \cite{PeCu18,feydy2018interpolating}, we require a sensible notion to measure
``distance'' between $k$-currents.

In this work, we will focus on the flat norm\footnote{The terminology ``flat'' carries no geometrical significance and refers to Whitney's use of musical notation flat $\normc{\cdot}^\flat$ and sharp
  $\normc{\cdot}^\sharp$.} 
due to \citet{whitney1957geometric}. To be precise, we consider a scaled variant introduced and studied by \citet{morgan2007,vixie2010multiscale}. 
This choice is motivated in Sec.~\ref{sec:flat}, where we show that the flat norm enjoys certain attractive properties similar to the celebrated Wasserstein distances.
For example, it metrizes the weak$^*$-convergence for normal currents.

A potential alternative to the flat norm are kernel metrics on spaces of currents \cite{vaillant2005surface, glaunes2008large}.
These have been proposed for diffeomorphic registration, but kernel distances on distributions have also been sucessfully employed for generative modeling, see \citet{li2017mmd}. 
Constructions similar to the Kantorovich relaxation in optimal transport but generalized to $k$-currents 
recently appeared in the context of convexifications for certain variational problems \cite{moellenh19}.

\section{Related Work}
Our main idea is illustrated in Fig.~\ref{fig:nutshell}, which
was inspired from the optimal transportation point of view on GANs given by \citet{GPC17_view}.

Tangent vectors of the data manifold, either prespecified \cite{simard1992tangent,simard1998transformation,fraser2003incorporating} or learned with
a contractive autoencoder \cite{rifai2011manifold}, have been used to train classifiers that aim to be \emph{invariant} to changes relative to the data manifold. In contrast to these works, we
use tangent vectors to learn interpretable representations and a generative model that aims to be \emph{equivariant}.
The principled introduction of tangent $k$-vectors into
probabilistic generative models is one of our main contributions.

Various approaches to learning informative or disentangled latent representations in a
completely unsupervised fashion exist \cite{schmidhuber,bVAE,CDHSSA16,factorVAE}. Our approach is
orthogonal to these works, as specifying tangent vectors further encourages informative
representations to be learned. For example, our GAN formulation could be
combined with a mutual information term as in InfoGAN \cite{CDHSSA16}.

Our work is more closely related to semi-supervised approaches on learning disentangled latent representations, which similarly also require some form of knowledge of the underlying factors \cite{hinton,denton,mathieu,siddharth} and also to conditional GANs \cite{mirza2014conditional,odena2017}.
However, the difference is the connection to geometric measure theory which we believe to be
completely novel, and our specific FlatGAN formulation that seamlessly extends the Wasserstein GAN \cite{ACB17}, cf. Fig.~\ref{fig:nutshell}.

Since the concepts we need from geometric measure theory are not commonly used in machine learning, we briefly review
them in the following section.
\section{Geometric Measure Theory}
\label{sec:gmt}
The book by \citet{federer} is still the formidable, definitive reference on the subject. As
a more accessible introduction we recommend \cite{KP08} or \cite{morgan2016geometric}. While our aim
is to keep the manuscript self-contained, we invite the interested reader to
consult Chapter~4 in \cite{morgan2016geometric}, which in turn refers to
the corresponding chapters in the book of \citet{federer} for more details.

\subsection{Grassmann Algebra}

\paragraph{Notation.}
Denote $\{ e_1, \hdots, e_d \}$ a basis of $\R^d$ with dual basis $\{ \dd x_1, \hdots, \dd x_d \}$ such
that $\dd x_i : \R^d \to \R$ is the linear functional that maps every $x = (x_1, \hdots, x_d)$ to the $i$-th component $x_i$.
For $k \leq d$, denote $I(d, k)$ as the ordered multi-indices $\bi = (i_1, \hdots, i_k)$ with $1 \leq i_1 < \hdots < i_k \leq d$.

One can multiply vectors in $\R^d$ to obtain a new object:
\begin{equation}
  \xi = v_1 \wedge \hdots \wedge v_k,
  \label{eq:sv}
\end{equation}
called a $k$-vector $\xi$ in $\R^d$. The wedge (or exterior) product $\wedge$ is characterized by
multilinearity
\begin{equation}
  \begin{aligned}
    &c v_1 \wedge v_2 = v_1 \wedge c v_2 = c (v_1 \wedge v_2), ~ \text{ for } c \in \R,\\
    &(u_1 + v_1) \wedge (u_2 + v_2) = \\
    &\qquad u_1 \wedge u_2 + u_1 \wedge v_2 + v_1 \wedge u_2 + v_1 \wedge v_2,
  \end{aligned}
  \label{eq:gm1}
\end{equation}
and it is alternating
\begin{equation}
  u \wedge v = -v \wedge u, \quad u \wedge u = 0.
  \label{eq:gm2}
\end{equation}
In general, any $k$-vector can be written as
\begin{equation}
  \xi = \sum_{\bi \in I(d,k)} a_\bi \cdot e_{i_1} \wedge \hdots \wedge e_{i_k} = \sum_{\bi \in I(d,k)} a_\bi \cdot e_\bi,
  \label{eq:vecsum}
\end{equation}
for coefficients $a_\bi \in \R$. The vector space of $k$-vectors is denoted by $\LM_k \R^d$ and has
dimension $\binom{d}{k}$. We define for two $k$-vectors $v = \sum_\bi a_\bi e_\bi$, $w = \sum_\bi b_\bi e_\bi$ an inner product $\iprod{v}{w} = \sum_\bi a_{\bi} b_{\bi}$ and the Euclidean norm $\normc{v} = \sqrt{\iprod{v}{v}}$.

A simple (or decomposable) $k$-vector is any $\xi \in \LM_k \R^d$ that can be written using products of $1$-vectors.
Simple $k$-vectors such as \eqref{eq:sv} are uniquely determined by the $k$-dimensional space spanned by the $\{ v_i \}$, their orientation and the norm $|v|$ corresponding to the area of the parallelotope spanned by the $\{ v_i \}$. Simple $k$-vectors with unit norm can therefore be thought of as oriented $k$-dimensional subspaces and the rules \eqref{eq:gm1}-\eqref{eq:gm2} can be thought of as equivalence relations. 

It turns out that the inner product of two simple $k$-vectors can be computed by the $k \times k$-determinant
\begin{equation}
  \iprod{w_1 \wedge \hdots \wedge w_k}{v_1 \wedge \hdots \wedge v_k} = \det \left( W^\top V \right),
  \label{eq:det}
\end{equation}
where the columns of $W \in \R^{d \times k}$, $V \in \R^{d \times k}$ contain the individual $1$-vectors.
This will be useful later for our practical implementation.

Not all $k$-vectors are simple. An illustrative example is $e_1 \wedge e_2 + e_3 \wedge e_4 \in \LM_2 \R^4$, which describes two $2$-dimensional subspaces in $\R^4$ intersecting only at zero.

The dual space of $\LM_k \R^d$ is denoted as $\LM^k \R^d$, and its elements are called $k$-covectors. They are similarly represented as \eqref{eq:vecsum} but with dual basis $\dd x_{\bi}$. Analogously to the previous page, we can define an inner product between $k$-vectors and $k$-covectors.
Next to the Euclidean norm $|\cdot|$, we define two additional norms due to \citet{whitney1957geometric}.
\begin{defi}[Mass and comass]
The {comass norm} defined for $k$-covectors $w \in \LM^k \R^n$
is given by
\begin{equation}
  \norm{w}^* = \sup \left \{ \iprod{w}{v} : v \text{ is simple }, |v| = 1 \right \},
  \label{eq:comass}
\end{equation}
and the {mass norm} for $v \in \LM_k \R^n$ is given by
\begin{equation}
  \begin{aligned}
    \norm{v} &= \sup \left \{ \iprod{v}{w} : \norm{w}^* \leq 1 \right \} \\
    &= \inf \left \{ \sum_i | \xi_i | : \xi_i \text{ are simple}, v = \sum_i \xi_i \right \}.
  \end{aligned}
  \label{eq:mass}
\end{equation}
\end{defi}
The mass norm is by construction the largest norm that agrees with the Euclidean norm on simple $k$-vectors. For the non-simple $2$-vector from before, we compute
\begin{equation}
  \norm{e_1 \wedge e_2 + e_3 \wedge e_4} = 2, ~ \normc{e_1 \wedge e_2 + e_3 \wedge e_4} = \sqrt{2}.
\end{equation}
Interpreting the non-simple vector as two tangent planes, we see that the mass norm gives the correct
area, while the Euclidean norm underestimates it. The comass $\norm{\cdot}^*$ will be used later to define 
the mass of currents and the flat norm.

\subsection{Differential Forms}
In order to define currents, we first need to introduce differential forms.
A differential $k$-form is a $k$-covectorfield $\omega : \R^d \to \LM^k \R^d$. The
support $\spt \omega$ is defined as the closure of the set $\{ x \in \R^d : \omega(x) \neq 0 \}$.

Differential forms allow one to perform coordinate-free integration over
oriented manifolds. Given some manifold $\cM \subset \R^d$, possibly with boundary, an
\emph{orientation} is a continuous map $\tau_\cM : \cM \to \LM_k \R^d$ which assigns to each point a simple $k$-vector
with unit norm that spans the tangent space at that point. Integration of a differential form over an oriented
manifold $\cM$ is then defined by:
\begin{equation}
  \int_\cM \omega = \int_\cM \iprod{\omega(x)}{\tau_\cM(x)} \, \dd \mathcal{H}^k(x),
  \label{eq:integral}
\end{equation}
where the second integral is the standard Lebesgue integral with respect to the $k$-dimensional
Hausdorff measure $\cH^k$ restricted to $\cM$, i.e., $(\cH^k \measurerestr \cM)(A) = \cH^k(A \cap \cM)$.
The $k$-dimensional Hausdorff measure assigns to sets in $\R^d$ their $k$-dimensional volume, see Chapter 2 in \citet{morgan2016geometric} for a nice illustration. For $k = d$ the Hausdorff measure coincides with the Lebesgue measure.

The exterior derivative of a differential $k$-form is the
$(k+1)$-form $d \omega : \R^d \to \LM^{k+1} \R^d$ defined by
\begin{equation}
  \begin{aligned}
    &\iprod{d \omega(x)}{v_1 \wedge \hdots \wedge v_{k+1}} =
    \lim_{h \to 0} \frac{1}{h^{k+1}} \int_{\partial P} \omega, 
  \end{aligned}
  \label{eq:exterior_diff}
\end{equation}
where $\partial P$ is the oriented boundary of the parallelotope spanned by the $\{ h v_i \}$ at point $x$.
The above definition is for example used in the textbook of \citet{hubbard}.
To get an intuition, note that for $k = 0$ this reduces to the familiar directional derivative $\iprod{d\omega(x)}{v_1} = \lim_{h \to 0} \frac{1}{h} \left( \omega(x + h v_1) - \omega(x) \right)$.
In case $\omega : \R^d \to \LM^k \R^d$ is sufficiently smooth, the limit in \eqref{eq:exterior_diff}
is given by
\begin{align}
    \label{eq:ext_diff_lim}
    &\iprod{d \omega(x)}{v_1 \wedge \hdots \wedge v_{k+1}} = \\
    &\qquad \sum_{i=1}^{k+1} (-1)^{i-1} \nabla_x \iprod{\omega(x)}{v_1 \wedge \hdots \wedge \hat v_i \wedge \hdots \wedge v_k} \cdot v_i, \notag
\end{align}
where $\hat v_i$ means that the vector $v_i$ is omitted. The formulation \eqref{eq:ext_diff_lim} will be used in the
practical implementation.
Interestingly, with \eqref{eq:integral} and \eqref{eq:exterior_diff} in mind, Stokes' theorem
\begin{equation}
  \int_{\cM} d \omega = \int_{\partial \cM} \omega,
  \label{eq:stokes}
\end{equation}
becomes almost obvious, as (informally speaking) integrating \eqref{eq:exterior_diff} one obtains \eqref{eq:stokes} since the oppositely oriented boundaries of neighbouring parallelotopes cancel each other out in the interior of $\cM$.

To define the pushforward of currents which is central to our formulation, we require the pullback of differential forms.
The pullback $g^\sharp \omega : \R^l \to \LM^k \R^l$ by a map $g : \R^l \to \R^d$ of the $k$-form $\omega : \R^d \to \LM^k \R^d$
is given by
\begin{equation}
    \iprod{g^\sharp \omega}{v_1 \wedge .. \wedge v_k} = \iprod{\omega \circ g}{D_{v_1} g \wedge .. \wedge D_{v_k} g},
    \label{eq:pullback}
\end{equation}
where $D_{v_i} g := \nabla g \cdot v_i$ and $\nabla g \in \R^{d \times l}$ is the Jacobian.
We will also require \eqref{eq:pullback} for the practical implementation.

\subsection{Currents}
We have now the necessary tools to define currents and the required operations on them,
which will be defined through duality with differential forms.
Consider the space of compactly supported and smooth $k$-forms in $\R^d$
which we denote by $\cD^k(\R^d)$. When furnished with an appropriate topology (cf. §4.1 in \citet{federer} for the details) this is a locally convex topological vector space. $k$-currents are continuous linear functionals on smooth, compactly supported differential forms, i.e., elements from the topological dual space $\cD_k(\R^d) = \cD^k(\R^d)'$. Some examples
for currents are given in Fig.~\ref{fig:currents}. The $0$-current in \textbf{(a)} could be an empirical data distribution,
and the $2$-current in \textbf{(b)} represents the data distribution with a two dimensional oriented tangent space at each data point. The $2$-current in \textbf{(c)} simply represents the set $[0,1]^2$ as an oriented manifold, its action on a differential form is given as in \eqref{eq:integral}.

A natural notion of convergence for currents is given by the weak$^*$ topology:
\begin{equation}
  T_i \weaks T ~\text{ iff } \,\, T_i(\omega) \to T(\omega), \text{ for all } \omega \in \cD^k(\R^d).
  \label{eq:weaks}
\end{equation}

\begin{figure}[t!]
  \begin{center}
    \setlength{\tabcolsep}{3pt}
    \begin{tabular}{ccc}
      \centering
      {\def\svgwidth{0.28\linewidth} 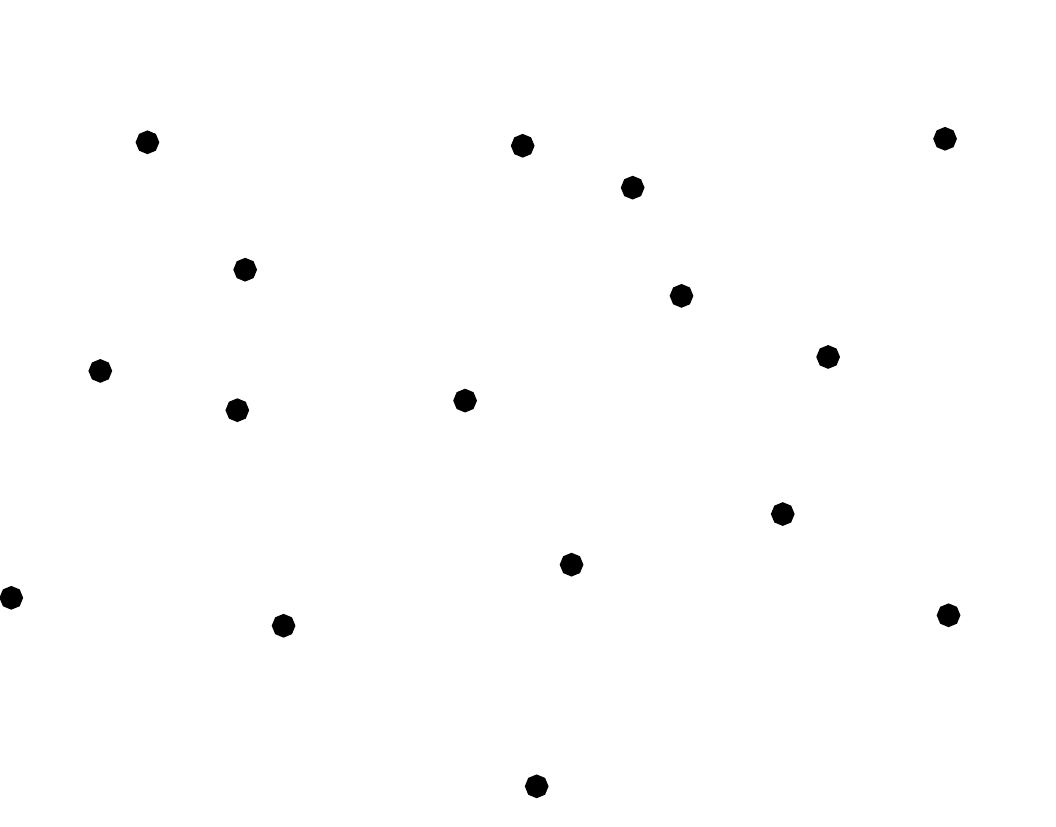}
      &{\def\svgwidth{0.28\linewidth} 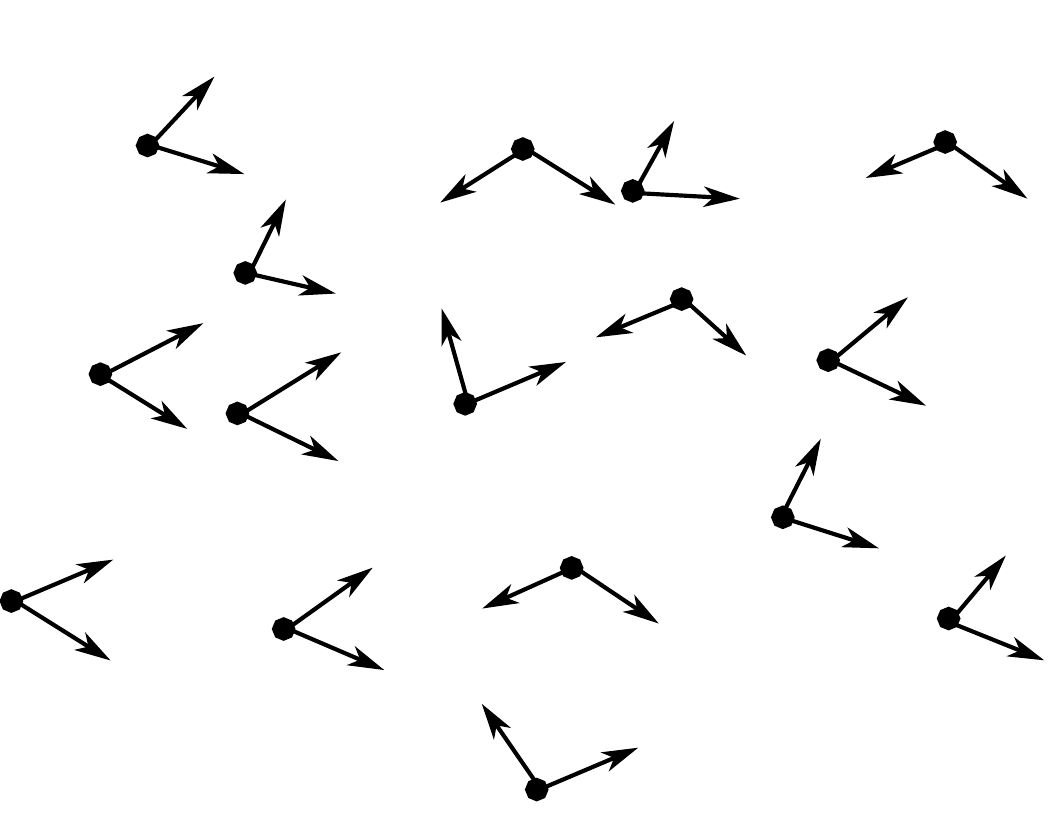}
      &{\def\svgwidth{0.22\linewidth} 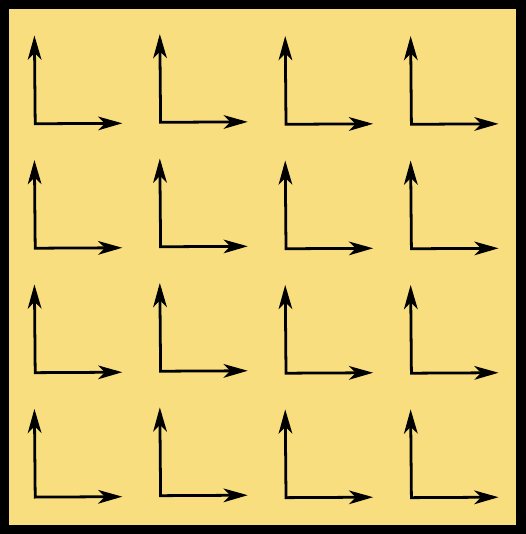}
      \\
      \textbf{(a)} $\sum_i \delta_{x_i}$
      &\textbf{(b)} $\sum_i \delta_{x_i} \wedge T_i$
      &\textbf{(c)} $\cH^2 \measurerestr [0,1]^2 \wedge e_{12}$ 
    \end{tabular}
  \end{center}
  \caption{Example of a $0$-current \textbf{(a)}, and $2$-currents \textbf{(b)}, \textbf{(c)}.}
  \label{fig:currents}
\end{figure}

The support of a current $T \in \cD_k(\R^d)$, $\spt T$, is the complement of the largest open set, so
that when testing $T$ with compactly supported forms on that open set the answer is zero. Currents with
compact support are denoted by $\cE_k(\R^d)$.
The boundary operator $\partial : \cD_k(\R^d) \to \cD_{k-1}(\R^d)$ is defined using exterior derivative
\begin{equation}
  \partial T(\omega) = T(d \omega),
  \label{eq:bdry}
\end{equation}
and Stokes' theorem \eqref{eq:stokes} ensures that this coincides with the
intuitive notion of boundary for currents which are represented by integration over manifolds
in the sense of \eqref{eq:integral}.

The pushforward of a current is defined using the pullback
\begin{equation}
  g_\sharp T(\omega) = T(g^\sharp \omega),
  \label{eq:pushfwd}
\end{equation}
where the intuition is that the pushforward transforms the current with the map $g$, see the illustration in Fig.~\ref{fig:nutshell}.

The mass of a current $T \in \cD_k(\R^d)$ is given by
\begin{equation}
  \bbM(T) = \sup \left \{ T(\omega) : \norm{\omega(x)}^* \leq 1 \right \}.
  \label{eq:mass}
\end{equation}
If the current $T$ is an oriented manifold then
the mass $\bbM(T)$ is the \emph{volume} of that manifold.
One convenient way to construct $k$-currents, is by combining a smooth $k$-vectorfield $\xi : \R^d \to \LM_k \R^d$ with a Radon measure $\mu$:
\begin{equation}
  (\mu \wedge \xi)(\psi) = \int \iprod{\xi}{\psi} \, \dd \mu, \text{ for all } \psi \in \cD^{k}(\R^d).
  \label{eq:imul}
\end{equation}
A concrete example is illustrated in Fig.~\ref{fig:currents} \textbf{(b)}, where given samples $\{ x_1, \hdots, x_N \} \subset \R^d$ and tangent $2$-vectors $\{ T_1, \hdots, T_N \} \subset \LM_2 \R^d$ a $2$-current is constructed.

For currents with finite mass there is a measure $\norm{T}$ and a map $\vec{T} : \R^d \to \LM_k \R^d$ with $\norm{\vec{T}(\cdot)} = 1$ almost everywhere so that we can represent it by integration as follows:
\begin{equation}
  T(\omega) = \int \iprod{\omega(x)}{\vec{T}(x)} \, \mathrm{d} \norm{T}(x) = \norm{T} \wedge \vec{T} \, (\omega).
  \label{eq:polar}
\end{equation}
Another perspective is that finite mass currents are simply $k$-vector valued Radon measures.
Currents with finite mass and finite boundary mass are called \emph{normal currents} \cite{federer1960normal}.
The space of normal currents with support in a compact set $\cX$ is denoted by $\bN_{k,\cX}(\R^d)$.

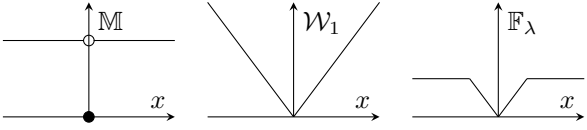
\begin{figure}[t!]
  \centering
  \begin{center}
  \begin{tabular}{ccc}
    % This file was created by matlab2tikz.
%
%The latest updates can be retrieved from
%  http://www.mathworks.com/matlabcentral/fileexchange/22022-matlab2tikz-matlab2tikz
%where you can also make suggestions and rate matlab2tikz.
%
\begin{tikzpicture}

\begin{axis}[%
width=0.9in,
height=0.6in,
scale only axis,
xmin=-3,
xmax=3,
ymin=0,
ymax=3,
axis background/.style={fill=white},
ticks=none,
xlabel={$x$},
ylabel={$\bbM$},
axis lines = middle,
]

\addplot [color=black]
  table[row sep=crcr]{%
-3	2\\
-0.170000000000001	2\\
};

\addplot [color=black, draw=none, mark=o, mark options={solid, black}]
  table[row sep=crcr]{%
0	2\\
};

\addplot [color=black, draw=none, mark=*, mark options={solid, black}]
  table[row sep=crcr]{%
0	0\\
};

\addplot [color=black]
  table[row sep=crcr]{%
0.170000000000001	2\\
3	2\\
};

\end{axis}
\end{tikzpicture}%
    &% This file was created by matlab2tikz.
%
%The latest updates can be retrieved from
%  http://www.mathworks.com/matlabcentral/fileexchange/22022-matlab2tikz-matlab2tikz
%where you can also make suggestions and rate matlab2tikz.
%
\begin{tikzpicture}

\begin{axis}[%
width=0.9in,
height=0.6in,
scale only axis,
xmin=-3,
xmax=3,
ymin=0,
ymax=3,
axis background/.style={fill=white},
ticks = none,
xlabel={$x$},
ylabel={$\cW_1$},
x label style={at={(0.5,0.2)}},
y label style={at={(0.41,0.5)}},
axis lines = middle,
]

\addplot [color=black]
  table[row sep=crcr]{%
-3	3\\
0	0\\
3 3\\
};

\addplot [color=transparent, draw=none, mark=*, mark options={white}, opacity=0]
  table[row sep=crcr]{%
0	0\\
};

\end{axis}
\end{tikzpicture}%
    &% This file was created by matlab2tikz.
%
%The latest updates can be retrieved from
%  http://www.mathworks.com/matlabcentral/fileexchange/22022-matlab2tikz-matlab2tikz
%where you can also make suggestions and rate matlab2tikz.
%
\begin{tikzpicture}

\begin{axis}[%
width=0.9in,
height=0.6in,
scale only axis,
xmin=-3,
xmax=3,
ymin=0,
ymax=3,
axis background/.style={fill=white},
ticks = none,
xlabel={$x$},
ylabel={$\bbF_\lambda$},
x label style={at={(0.5,0.2)}},
y label style={at={(0.41,0.5)}},
axis lines = middle,
]

\addplot [color=black]
  table[row sep=crcr]{%
-3	1\\
-1	1\\
};

\addplot [color=black]
  table[row sep=crcr]{%
-1	1\\
0	0\\
1	1\\
};

\addplot [color=black]
  table[row sep=crcr]{%
1	1\\
3	1\\
};

\addplot [color=transparent, draw=none, mark=*, mark options={white}, opacity=0]
  table[row sep=crcr]{%
0	0\\
};

\end{axis}
\end{tikzpicture}%
  \end{tabular}
  \end{center}
  \caption{Illustration of distances between $0$-currents on the example of two Dirac measures $\delta_x$, $\delta_0$. The flat metric $\bbF_\lambda$ has the following advantages: unlike the mass $\bbM$ it is continuous, and unlike Wasserstein-$1$ it easily generalizes to $k$-currents (see Fig.~\ref{fig:distances_2d}).}
  \label{fig:distances}
\end{figure}
\section{The Flat Metric}
\label{sec:flat}
As indicated in Fig.~\ref{fig:nutshell}, we wish to fit a current $g_\sharp S$ that is
the pushforward of a low-dimensional latent current $S$ to the current $T$ given by the data.
A more meaningful norm on currents than the mass $\bbM$ turns out to be the flat norm.
\begin{defi}[Flat norm and flat metric]
  The flat norm with scale\footnote{We picked a different convention for $\lambda$ as in \cite{morgan2007}, where it bounds the other constraint, to emphasize the connection to the Wasserstein-1 distance.} $\lambda > 0$ is defined for any $k$-current $T \in \cD_k(\R^d)$ as
  \begin{equation}
    \begin{aligned}
    \bbF_\lambda(T)
    &= \sup \bigl \{ T(\omega) ~|~ \omega \in \cD^k(\R^d), \text{ with } \\
    & \hspace{0.25cm} \norm{\omega(x)}^* \leq \lambda, \norm{d\omega(x)}^* \leq 1, \text{ for all } x \bigr \}.
  \end{aligned}
  \label{eq:flatnorm_sup}
  \end{equation}
  For $\lambda = 1$ we simply write $\bbF(\cdot) \equiv \bbF_1(\cdot)$ and $\bbF_\lambda(S, T) = \bbF_\lambda(S - T)$ will be denoted as the flat metric.
\end{defi}
The flat norm also has a primal formulation
\begin{align}
  \bbF_\lambda(T) &= \min_{B \in \cE_{k+1}(\R^d)} \lambda \bbM(T - \partial B) + \bbM(B)   \label{eq:flatnorm_min1} \\
  &= \min_{T = A + \partial B} \lambda \bbM(A) + \bbM(B),   \label{eq:flatnorm_min2}
\end{align}
where the minimum in \eqref{eq:flatnorm_min1}--\eqref{eq:flatnorm_min2} can be shown to exist, see §4.1.12 in \citet{federer}.
The flat norm is finite if $T$ is a normal current
and it can be verified that it is indeed a norm. 

To get an intuition, we compare the flat norm to the mass \eqref{eq:mass} and the Wasserstein-1 distance in Fig.~\ref{fig:distances} on the example of Dirac measures $\delta_x$, $\delta_0$. The mass $x \mapsto \bbM(\delta_x - \delta_0)$ is discontinuous and has zero gradient and is therefore unsuitable as a distance between currents. While the Wasserstein-1 metric $x \mapsto \cW(\delta_x, \delta_0)$ is continuous in $x$, it does not easily generalize from probability measures to $k$-currents. In contrast, the flat metric $x \mapsto \bbF_\lambda(\delta_x, \delta_0)$ has a meaningful geometric interpretation also for arbitrary $k$-currents. In Fig.~\ref{fig:distances_2d} 
we illustrate the flat norm for two $1$-currents. In that figure, if $S$ and $T$ are of length one and are $\varepsilon$ apart, then $\bbF_\lambda(S, T) \leq (1 + 2 \lambda) \varepsilon$ which converges to zero for $\varepsilon \to 0$.

Note that for $0$-currents, the flat norm \eqref{eq:flatnorm_sup} is strongly related to the Wasserstein-1 distance
except for the additional constraint on the dual variable $\norm{\omega(x)}^* \leq \lambda$, which in the example of Fig.~\ref{fig:distances} controls the truncation cutoff.
Notice also the similarity of \eqref{eq:flatnorm_min1} to the Beckmann formulation of the Wasserstein-1 distance
\cite{beckmann1952continuous,San15}, with the difference being the implementation of the ``divergence constraint'' with a soft penalty $\lambda \bbM(T - \partial B)$.
Considering the case $\lambda = \infty$ as in the Wasserstein distance is problematic in case we have $k > 0$, since not every current $T \in \cD_k(\R^n)$ is the boundary of a $(k+1)$-current,
see the example above in Fig.~\ref{fig:distances_2d}.

The following proposition studies the effect of the scale parameter $\lambda > 0$ on
the flat norm.
\begin{prop}
  For any $\lambda > 0$, the following relation holds
  \begin{equation}
    \min \{ 1, \lambda \} \cdot \bbF(T) \leq \bbF_\lambda(T) \leq \max \{ 1, \lambda \} \cdot \bbF(T),
    \label{eq:equiv_norms}
  \end{equation}
  meaning that $\bbF$ and $\bbF_\lambda$ are equivalent norms. 
  \label{prop:equiv}
\end{prop}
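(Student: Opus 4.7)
The plan is to work directly with the dual (sup) definition in \eqref{eq:flatnorm_sup} and exploit the fact that rescaling a test form $\omega \mapsto c\omega$ scales both comass constraints linearly: $\|c\omega(x)\|^* = c\|\omega(x)\|^*$ and $\|d(c\omega)(x)\|^* = c\|d\omega(x)\|^*$. This lets us move admissible forms back and forth between the feasible sets of $\bbF$ and $\bbF_\lambda$ by an appropriate scalar, and the two inequalities then fall out by taking suprema.

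I would split on whether $\lambda \geq 1$ or $\lambda < 1$. Consider first $\lambda \geq 1$, so that $\max\{1,\lambda\} = \lambda$ and $\min\{1,\lambda\} = 1$. For the upper bound, if $\omega$ is feasible for $\bbF_\lambda(T)$, then $\omega/\lambda$ satisfies $\|\omega/\lambda\|^* \leq 1$ and $\|d(\omega/\lambda)\|^* \leq 1/\lambda \leq 1$, hence is feasible for $\bbF(T)$. Thus $T(\omega) = \lambda \, T(\omega/\lambda) \leq \lambda\, \bbF(T)$, and taking the sup over $\omega$ gives $\bbF_\lambda(T) \leq \lambda\,\bbF(T)$. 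For the lower bound, any $\omega$ feasible for $\bbF(T)$ already has $\|\omega\|^* \leq 1 \leq \lambda$ and $\|d\omega\|^* \leq 1$, so it is feasible for $\bbF_\lambda(T)$; this yields $\bbF(T) \leq \bbF_\lambda(T)$.

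The case $\lambda < 1$ is symmetric. Now $\max\{1,\lambda\} = 1$ and $\min\{1,\lambda\} = \lambda$. Any $\omega$ feasible for $\bbF_\lambda(T)$ automatically satisfies $\|\omega\|^* \leq \lambda \leq 1$, so it is feasible for $\bbF(T)$, which gives the upper bound $\bbF_\lambda(T) \leq \bbF(T)$. Conversely, if $\omega$ is feasible for $\bbF(T)$, then $\lambda\omega$ has $\|\lambda\omega\|^* \leq \lambda$ and $\|d(\lambda\omega)\|^* \leq \lambda \leq 1$, so it is feasible for $\bbF_\lambda(T)$, and $\lambda\, T(\omega) = T(\lambda\omega) \leq \bbF_\lambda(T)$, giving $\lambda\, \bbF(T) \leq \bbF_\lambda(T)$. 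Combining both cases produces \eqref{eq:equiv_norms}, and equivalence of norms is immediate since the constants $\min\{1,\lambda\}$ and $\max\{1,\lambda\}$ are strictly positive for $\lambda > 0$.

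There is essentially no obstacle here: the argument uses only the homogeneity of the comass norm and the linearity of both $T$ and the exterior derivative $d$, and no appeal to the primal formulation \eqref{eq:flatnorm_min1}--\eqref{eq:flatnorm_min2} is needed. The only thing to be slightly careful about is that $\omega \in \cD^k(\R^d)$ is closed under scalar multiplication (obvious) so rescaled forms remain admissible test forms in the definition.
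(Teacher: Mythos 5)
Your proof is correct, and it takes a genuinely different (and more elementary) route than the paper. The paper proves the proposition by invoking the dilation identity $\bbF_\lambda(T) = \lambda^k\, \bbF(d_{\lambda^{-1}\sharp}T)$ of \citet{morgan2007} together with Federer's pushforward estimate $\bbF(f_\sharp T) \leq \sup\{\Lip(f)^k, \Lip(f)^{k+1}\}\,\bbF(T)$ applied to the dilation map, and then a second manipulation with $d_{\lambda\sharp}d_{\lambda^{-1}\sharp}T$ to get the reverse inequality. You instead argue directly on the dual definition \eqref{eq:flatnorm_sup}: rescaling test forms $\omega \mapsto \omega/\lambda$ or $\omega \mapsto \lambda\omega$ moves them between the feasible sets of $\bbF$ and $\bbF_\lambda$ at the cost of the factor $\lambda$, and the inclusions of feasible sets give the trivial direction in each case. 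This uses only positive homogeneity of the comass, linearity of $T$ and of $d$, and closure of $\cD^k(\R^d)$ under scalar multiplication, so it is self-contained and avoids both external citations; an equally short argument is available from the primal form \eqref{eq:flatnorm_min2}, since $\min\{1,\lambda\}(\bbM(A)+\bbM(B)) \leq \lambda\bbM(A)+\bbM(B) \leq \max\{1,\lambda\}(\bbM(A)+\bbM(B))$ for any decomposition $T = A + \partial B$. What the paper's route buys in exchange is the exact scaling behaviour of the flat norm under dilation, which is a stronger structural statement than the two-sided bound, but for the equivalence of norms your direct argument is simpler and fully rigorous.
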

\begin{proof}
  By a result of \citet{morgan2007} we have the interesting relation
  \begin{equation}
    \bbF_\lambda(T) = \lambda^k \, \bbF(d_{\lambda^{-1}\sharp}T),
  \end{equation}
  where $d_{\lambda}$ is the $\lambda$-dilation.
  Using the bound $\bbF(f_\sharp T) \leq \sup \{ \Lip(f)^k, \Lip(f)^{k+1} \} \bbF(T)$, §4.1.14 in \citet{federer},
  and the fact that $\Lip(d_{\lambda^{-1}}) = \lambda^{-1}$, one inequality directly follows.
  For the other side, notice that
  \begin{equation}
    \begin{aligned}
      \bbF(T) &= \bbF(d_{\lambda\sharp} d_{\lambda^{-1}\sharp} T) = \bbF_{\lambda^{-1}}(d_{\lambda^{-1}\sharp} T) \lambda^k \\
      &\leq \max \{1, \lambda^{-1}\} \bbF(d_{\lambda^{-1}\sharp} T) \lambda^k \\
      &= \max\{ 1, \lambda^{-1} \} \bbF_\lambda(T).
    \end{aligned}
  \end{equation}
  and dividing by $\max\{ 1, \lambda^{-1} \}$ yields the result.
\end{proof}

\begin{figure}[t!]
  \centering
  \begin{center}
  \begin{tabular}{ccc}
    {\def\svgwidth{0.29\linewidth} 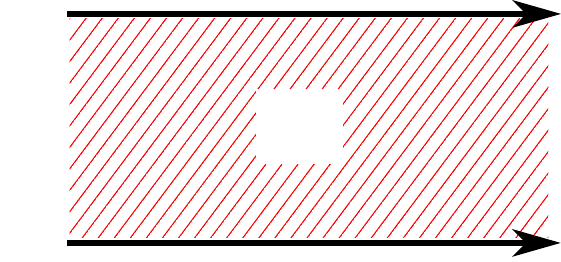}
    &{\def\svgwidth{0.27\linewidth} 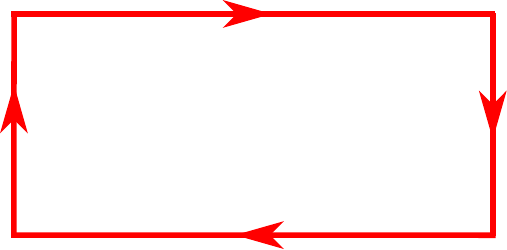}
    &{\def\svgwidth{0.27\linewidth} 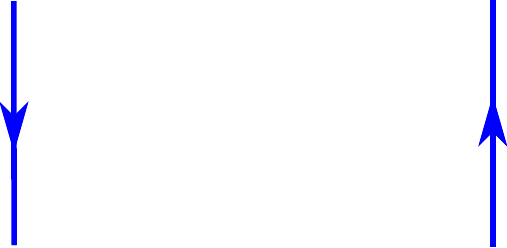}
  \end{tabular}
  \end{center}
  \caption{The flat metric $\bbF_\lambda(S, T)$ is given an optimal decomposition $S - T = {\color{blue} A} + {\color{red} \partial B}$ into a $k$-current {\color{blue} $A$} and the boundary of a $(k+1)$-current {\color{red} $B$} with minimal weighted mass ${\color{blue} \lambda \bbM(A)} + {\color{red} \bbM(B)}$. An intuition is that $\color{blue} \lambda \bbM(A)$ is a penalty that controls how closely $\color{red} \partial B$ should approximate $S - T$, while $\color{red} \bbM(B)$ is the $(k+1)$-dimensional volume of $\color{red} B$.}
  \label{fig:distances_2d}
\end{figure}

The importance of the flat norm is due to the fact that it metrizes the weak$^*$-convergence \eqref{eq:weaks} on compactly supported normal currents with uniformly bounded mass and boundary mass.
\begin{prop}
  Let $\cX \subset \R^d$ be a compact set and $c > 0$ some fixed constant. For a sequence $\{ T_j \} \subset \bN_{k,\cX}(\R^d)$ with $\bbM(T_j) + \bbM(\partial T_j) < c$ we have that:
  \begin{equation}
    \bbF_\lambda(T, T_j) \to 0 \quad \text{ if and only if } \quad T_j \weaks T.
  \end{equation}
\end{prop}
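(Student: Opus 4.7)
The plan is to prove the two directions separately, with the forward implication being essentially a rescaling argument and the reverse implication relying on Federer's compactness theorem for normal currents.

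For the direction $\bbF_\lambda(T, T_j) \to 0 \,\Rightarrow\, T_j \weaks T$, fix any test form $\omega \in \cD^k(\R^d)$. Since $\omega$ is smooth and compactly supported, both $\sup_x \norm{\omega(x)}^*$ and $\sup_x \norm{d\omega(x)}^*$ are finite. I would therefore pick a constant $C = C(\omega, \lambda) > 0$ large enough that $\omega/C$ satisfies the dual constraints $\norm{(\omega/C)(x)}^* \leq \lambda$ and $\norm{d(\omega/C)(x)}^* \leq 1$ pointwise. The definition \eqref{eq:flatnorm_sup} then yields
\begin{equation}
    |T_j(\omega) - T(\omega)| = C \cdot |(T_j - T)(\omega/C)| \leq C \cdot \bbF_\lambda(T_j - T),
\end{equation}
which tends to zero as $j \to \infty$. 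Note this direction does not even require the uniform mass bound.

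For the converse direction $T_j \weaks T \,\Rightarrow\, \bbF_\lambda(T,T_j) \to 0$, the plan is to argue by contradiction and subsequence extraction. Suppose there exists $\delta > 0$ and a subsequence (still denoted $T_j$) with $\bbF_\lambda(T, T_j) \geq \delta$. Since $\spt T_j \subset \cX$ is compact and $\bbM(T_j) + \bbM(\partial T_j) < c$, the sequence lies in a uniformly bounded subset of $\bN_{k,\cX}(\R^d)$. I would then invoke Federer's compactness theorem for normal currents (§4.2.17 in \citet{federer}), which states that such bounded subsets are sequentially compact in the flat norm topology. This yields a further subsequence $T_{j_m}$ converging in the flat norm to some $T^* \in \bN_{k,\cX}(\R^d)$.

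Applying the forward direction just established to $T_{j_m} - T^*$ shows $T_{j_m} \weaks T^*$. Combined with the hypothesis $T_{j_m} \weaks T$ and uniqueness of weak$^*$ limits in $\cD_k(\R^d)$, we conclude $T^* = T$. But then $\bbF_\lambda(T, T_{j_m}) \to 0$, contradicting $\bbF_\lambda(T, T_{j_m}) \geq \delta > 0$. The main obstacle is the invocation of Federer's compactness theorem; without this deep result one would have to construct the approximating $(k{+}1)$-current $B$ in the primal formulation \eqref{eq:flatnorm_min1} explicitly (e.g.\ via deformation/polyhedral approximation estimates), which is the technically heavy part of the classical proof. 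A minor point worth noting is that the compactness theorem is most naturally stated for $\bbF = \bbF_1$, but by Proposition~\ref{prop:equiv} the norms $\bbF$ and $\bbF_\lambda$ are equivalent for any fixed $\lambda > 0$, so the argument transfers directly.
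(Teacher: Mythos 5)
Your proposal is correct, but it follows a different route from the paper. The paper's own proof is essentially a two-line citation: it reduces to $\lambda = 1$ via the norm equivalence of Prop.~\ref{prop:equiv} and then invokes Corollary~7.3 of \citet{federer1960normal}, which states exactly the equivalence of flat and weak$^*$ convergence on mass- and boundary-mass-bounded normal currents with support in a fixed compact set. You instead reprove that equivalence: the forward direction by rescaling an arbitrary test form so that it becomes admissible in \eqref{eq:flatnorm_sup} (correct, and you rightly note it needs no mass bound), and the converse by contradiction, extracting a flat-convergent subsequence from the $N$-bounded set via a compactness theorem and identifying the limit through uniqueness of weak$^*$ limits. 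This buys a more transparent argument that shows precisely where the bound $\bbM(T_j)+\bbM(\partial T_j)<c$ enters, at the price of outsourcing the hard part to a different classical result. Two caveats: first, the reference §4.2.17 in Federer is usually quoted as the compactness theorem for \emph{integral} currents, whereas you need flat-norm compactness of $\{T \in \bN_{k,\cX}(\R^d) : \bbM(T)+\bbM(\partial T)\le c\}$; that statement is true, but you should cite it in its normal-current form (it follows from the deformation theorem, which gives total boundedness in the flat norm, together with completeness and lower semicontinuity of mass and boundary mass). Second, beware of circularity: one common way to obtain that compactness is Banach--Alaoglu plus the very equivalence of weak$^*$ and flat convergence you are proving, so your argument is only non-circular if you rely on the deformation-theorem route (or simply cite Federer--Fleming Corollary~7.3 directly, as the paper does). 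With the reference fixed and that proof route made explicit, your argument is sound and, unlike the paper's, self-contained up to the compactness theorem.
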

\begin{proof}
  Due to Prop.~\ref{prop:equiv} it is enough to consider the case $\lambda = 1$, which
  is given by Corollary~7.3 in the paper of \citet{federer1960normal}.
\end{proof}

\section{Flat Metric Minimization}
\label{sec:gan}
Motivated by the theoretical properties of the flat metric shown in the previous section,
we consider the following optimization problem:
\begin{equation}
  \min_{\theta \in \Theta} ~ \bbF_\lambda(g_{\theta\sharp} S, T),
  \label{eq:primal_problem}
\end{equation}
where $S \in \bN_{k,\cZ}(\R^l)$ and $T \in \bN_{k, \cX}(\R^d)$.
We will assume that $g : \cZ \times \Theta \to \cX$ is parametrized
with parameters in a compact set $\Theta \subset \R^n$ and write $g_\theta : \cZ \to \cX$
to abbreviate $g(\cdot, \theta)$ for some $\theta \in \Theta$. We need the following
assumption to be able to prove the existence of minimizers for the problem \eqref{eq:primal_problem}.
\begin{asm}
  The map $g : \cZ \times \Theta \to \cX$ is smooth in $z$ with uniformly bounded derivative.
  Furthermore, we assume that $g(z, \cdot)$ is locally Lipschitz continuous and that the parameter set $\Theta \subset \R^n$ is compact.
  \label{asm:1}
\end{asm}
Under this assumption, we will show that the objective in \eqref{eq:primal_problem} is Lipschitz continuous. 
This will in turn guarantee existence of minimizers, as the domain is assumed to be compact.
\begin{prop}
  Let $S \in \bN_{k, \cZ}(\R^l)$, $T \in \bN_{k, \cX}(\R^d)$ be normal currents with compact support. If the pushforward map $g : \cZ \times \Theta \to \cX$ fulfills Assumption~\ref{asm:1}, then the function $\theta \mapsto \bbF_\lambda(g_{\theta\sharp} S, T)$ is Lipschitz continuous and hence differentiable almost everywhere.
  \label{prop:cnt}
\end{prop}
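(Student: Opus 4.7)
The plan is to reduce the claim to a linear-in-$|\theta_1-\theta_2|$ bound on $\bbF_\lambda(g_{\theta_1\sharp}S - g_{\theta_2\sharp}S)$ and build an explicit admissible competitor for the primal flat norm (\ref{eq:flatnorm_min2}) out of a homotopy. First, by the reverse triangle inequality for the flat norm,
\[ \bigl|\bbF_\lambda(g_{\theta_1\sharp}S, T) - \bbF_\lambda(g_{\theta_2\sharp}S, T)\bigr| \;\leq\; \bbF_\lambda\bigl(g_{\theta_1\sharp}S - g_{\theta_2\sharp}S\bigr), \]
so it suffices to control the right-hand side. I would then introduce the linear interpolation $h : [0,1]\times\cZ \to \cX$, $h(t,z) = g\bigl(z, (1-t)\theta_1 + t\theta_2\bigr)$, and apply Federer's homotopy formula (§4.1.9) to obtain the decomposition
\[ g_{\theta_2\sharp}S - g_{\theta_1\sharp}S \;=\; \partial B \;\pm\; A, \quad B := h_\sharp\bigl([0,1]\times S\bigr), \quad A := h_\sharp\bigl([0,1]\times \partial S\bigr), \]
which directly plugs into (\ref{eq:flatnorm_min2}) to yield $\bbF_\lambda(g_{\theta_2\sharp}S - g_{\theta_1\sharp}S) \leq \lambda\bbM(A) + \bbM(B)$.

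The next step is to estimate the two masses. Under Assumption~\ref{asm:1}, uniform boundedness of $\nabla_z g$ on $\cZ \times \Theta$ furnishes a constant $L_z$ with $|\nabla_z h| \leq L_z$, while local Lipschitz continuity of $g(z,\cdot)$ combined with compactness of $\cZ$ and $\Theta$ yields (on a neighborhood of any fixed segment in $\Theta$) a uniform constant $L_\theta$ with $|\partial_t h(t,z)| \leq L_\theta\,|\theta_1-\theta_2|$. Every simple tangent $(k{+}1)$-vector of the product current $[0,1]\times S$ factors as $e_t \wedge v_1 \wedge \cdots \wedge v_k$ with the $v_i$ tangent to $\cZ$, so the pullback formula (\ref{eq:pullback}) and the determinant identity (\ref{eq:det}) give the anisotropic pushforward bound
\[ \bbM(B) \;\leq\; L_\theta\,L_z^k\,\bbM(S)\,|\theta_1-\theta_2|, \qquad \bbM(A) \;\leq\; L_\theta\,L_z^{k-1}\,\bbM(\partial S)\,|\theta_1-\theta_2|. \]
Since $S$ is a normal current, $\bbM(S) + \bbM(\partial S) < \infty$, and the resulting local Lipschitz constant is finite. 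Covering the compact $\Theta$ by finitely many such neighborhoods upgrades this to a global Lipschitz bound on $\theta \mapsto \bbF_\lambda(g_{\theta\sharp}S, T)$, and Rademacher's theorem then yields differentiability almost everywhere.

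The main obstacle is the anisotropic mass estimate: applying the standard pushforward bound $\bbM(f_\sharp C) \leq \Lip(f)^{\dim C}\bbM(C)$ (§4.1.14) to $h$ with $\Lip(h) = \max(L_z, L_\theta|\theta_1-\theta_2|)$ only furnishes a constant bound on $\bbM(B)$, missing the linear factor in $|\theta_1-\theta_2|$ that is essential for Lipschitz continuity. The crucial structural observation is that on $[0,1]\times\cZ$ every simple tangent multivector has the factored form $e_t \wedge v$, so only the $t$-direction need be contracted by the small Lipschitz constant while the $z$-directions contribute the fixed factor $L_z^{\dim C}$. Making this rigorous amounts either to a direct Cauchy--Binet / determinant estimate on the block Jacobian of $h$ using (\ref{eq:det}), or to invoking the finer pushforward estimates available for product currents in Federer §4.1.8. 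A minor technicality is the passage from the pointwise local Lipschitz assumption on $g(z,\cdot)$ to a constant $L_\theta$ uniform in $z\in\cZ$, which follows from continuity of the derivative in $z$ and compactness of $\cZ$ and $\Theta$.
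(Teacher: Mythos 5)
Your argument is correct in substance and lands on the same underlying estimate as the paper, but it reaches it by a different route. The paper also starts from the reverse triangle inequality, then reduces $\bbF_\lambda$ to $\bbF$ via the norm equivalence of Prop.~\ref{prop:equiv}, localizes to the ``stronger'' flat norm $\bbF_C$ on the compact set $C$ of convex combinations $(1-t)g_\theta(z)+t g_{\theta'}(z)$, and cites the ready-made inequality after \S4.1.13 in Federer, $\bbF_C(g_{\theta\sharp}S-g_{\theta'\sharp}S)\leq \norm{S}(|g_\theta-g_{\theta'}|\rho^k)+\norm{\partial S}(|g_\theta-g_{\theta'}|\rho^{k-1})$, before concluding with the locally-Lipschitz-plus-compactness step and Rademacher. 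You instead re-derive that inequality constructively: the homotopy formula produces an explicit competitor $(A,\partial B)$ for the primal formulation \eqref{eq:flatnorm_min2}, and the anisotropic ($e_t\wedge v_1\wedge\cdots\wedge v_k$) mass estimate gives the linear factor in $\norm{\theta_1-\theta_2}$ — which is exactly how Federer's cited inequality is proved. What your route buys: the $\lambda$-weighting enters directly as $\lambda\bbM(A)+\bbM(B)$, so you never need Prop.~\ref{prop:equiv}, and the decomposition is explicit; what the paper's route buys: by citing the dual-side inequality it avoids the product-current machinery and, more importantly, it homotopes \emph{in image space}, $h(t,z)=(1-t)g_{\theta_1}(z)+tg_{\theta_2}(z)$, whereas your homotopy $h(t,z)=g(z,(1-t)\theta_1+t\theta_2)$ needs the parameter segment to lie in the domain of $g$, i.e.\ implicitly assumes $\Theta$ convex (Assumption~\ref{asm:1} only assumes compactness); your local-neighborhood covering does not repair this, since even nearby points of a non-convex $\Theta$ can have chords leaving $\Theta$. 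The fix is immediate — use the affine image-space homotopy, for which $|\partial_t h|=|g_{\theta_1}(z)-g_{\theta_2}(z)|\leq\Lip(g)\norm{\theta_1-\theta_2}$ and $|\nabla_z h|$ is controlled by Assumption~\ref{asm:1} — after which your masses obey the same bounds and the conclusion, including Rademacher, follows as you state. (The remaining technicality you flag, uniformity in $z$ of the local Lipschitz constant of $g(z,\cdot)$, is likewise glossed over in the paper's own proof, so it is not specific to your argument.)
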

\begin{proof}
  In Appendix~\ref{app:a}.
\end{proof}

\subsection{Application to Generative Modeling}
We now turn towards our considered application illustrated in Fig.~\ref{fig:nutshell}. There, we denote by $k \geq 0$ the number of tangent vectors we specify at each sample point.
The latent current $S \in \bN_{k,\cZ}(\R^l)$ is constructed by combining a probability distribution $\mu \in \bN_{0, \cZ}(\R^l)$,
which could for example be the uniform distribution,
with the unit $k$-vectorfield as follows:
\begin{equation}
  S = \mu \wedge (e_1 \wedge \hdots \wedge e_k).
  \label{eq:S}
\end{equation}
For an illustration, see the right side of Fig.~\ref{fig:nutshell} and Fig.~\ref{fig:currents}.
The data current $T \in \bN_{k,\cX}(\R^d)$ is constructed from the samples $\{ x_i \}_{i=1}^N$
and tangent vectorfields $T_i : \cX \to \LM_k \R^d$. 
\begin{equation}
  T = \frac{1}{N} \sum_{i=1}^N \delta_{x_i} \wedge T_i,
  \label{eq:T}
\end{equation}
The tangent $k$-vectorfields $T_i(x) = T_{i,1} \wedge \hdots \wedge T_{i,k}$
are given by individual tangent vectors to the data manifold $T_{i,j} \in \R^d$.
For an illustration, see the left side of Fig.~\ref{fig:nutshell} or Fig.~\ref{fig:currents}.
After solving \eqref{eq:primal_problem}, the
map $g_\theta : \cZ \to \cX$ will be our generative model, where changes in the latent
space $\cZ$ along the unit directions $e_1, \hdots, e_k$ are expected to behave equivariantly
to the specified tangent directions $T_{i,1}, \hdots, T_{i,k}$ near $g(z)$.

\setlength{\tabcolsep}{12pt}
\begin{figure*}[t!]
  \centering
  \begin{center}
  \begin{tabular}{rccccc}
    \rotatebox{90}{\parbox[t]{1in}{\hspace*{0.7cm}$k=0$}}
    &{\includegraphics[width=0.145\linewidth]{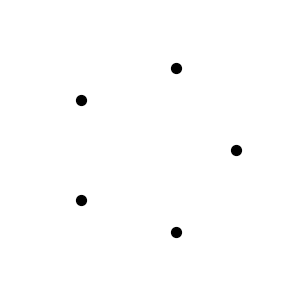}}
    &{\includegraphics[width=0.145\linewidth]{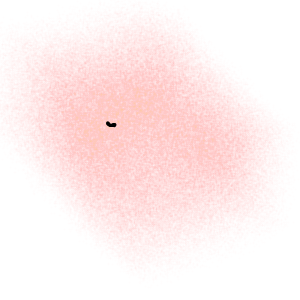}}
    &{\includegraphics[width=0.145\linewidth]{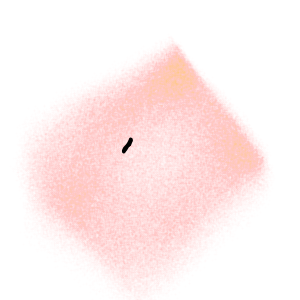}}
    &{\includegraphics[width=0.145\linewidth]{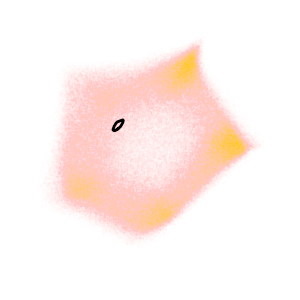}}
    &{\includegraphics[width=0.145\linewidth]{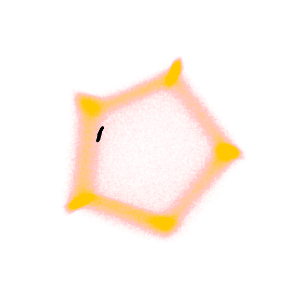}}\\[-0.45cm]
    \rotatebox{90}{\parbox[t]{1in}{\hspace*{0.7cm}$k=1$}}
    &{\includegraphics[width=0.145\linewidth]{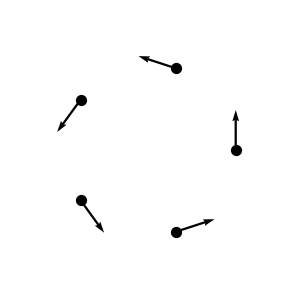}}
    &{\includegraphics[width=0.145\linewidth]{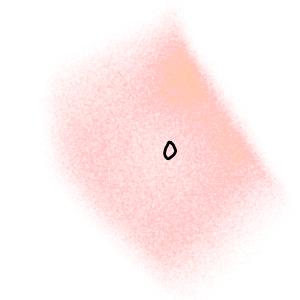}}
    &{\includegraphics[width=0.145\linewidth]{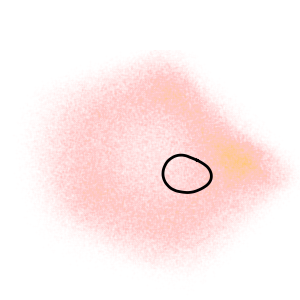}}
    &{\includegraphics[width=0.145\linewidth]{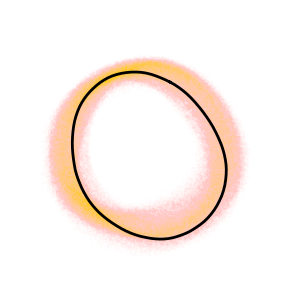}}
    &{\includegraphics[width=0.145\linewidth]{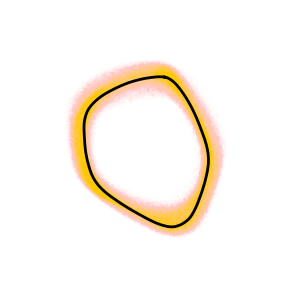}}\\
    &$T \in \bN_{k, \cX}(\R^2)$
    &Epoch $250$
    &Epoch $500$
    &Epoch $1000$
    &Epoch $2000$
  \end{tabular}
  \end{center}
  \caption{We illustrate the effect of moving from $k = 0$ to $k = 1$ and plot the measure $\norm{g_\sharp S}$ of the pushforward of a $k$-current $S \in \bN_{k, \cZ}(\R^5)$ (shown in orange) for different epochs. The black curve illustrates a walk along the first latent dimension $z_1$. For $k = 0$, which is similar to WGAN-GP \cite{GAADC17},
  the latent walk is not meaningful. The proposed approach ($k = 1$) allows to specify tangent vectors at the samples to which the first latent dimension behaves equivariantly, yielding an interpretable representation.}
  \label{fig:exp2d}
\end{figure*}

\subsection{FlatGAN Formulation}
\label{sec:impl}
To get a primal-dual formulation (or two player zero-sum game) in the spirit of GANs, we insert the definition of the flat norm \eqref{eq:flatnorm_sup} into the primal problem \eqref{eq:primal_problem}:
\begin{equation}
  \min_{\theta \in \Theta} ~ \sup_{\substack{\omega \in \cD^k(\R^d)\\ \norm{\omega}^* \leq \lambda, \norm{d\omega}^* \leq 1}} S({g_\theta}^\sharp \omega) - T(\omega),
  \label{eq:primal_dual_problem}
\end{equation}
where $\theta \in \Theta$ are for example the parameters of a neural network.
In the above equation, we also used the definition of pushforward \eqref{eq:pushfwd}.
Notice that for $k = 0$ the exterior derivative in \eqref{eq:primal_dual_problem} specializes to the gradient. This yields
a Lipschitz constraint, and as for sufficiently large $\lambda$ the other constraint becomes irrelevant, 
the problem \eqref{eq:primal_dual_problem} is
closely related to the Wasserstein GAN \cite{BALO17}. The
novelty in this work is the generalization to $k > 0$.

Combining \eqref{eq:S} and \eqref{eq:T} into \eqref{eq:primal_dual_problem} we arrive at the objective
\begin{equation}
  \begin{aligned}
    &E(\theta, \omega) = - \frac{1}{N} \sum_{i=1}^N \iprod{\omega(x_i)}{T_i} \\
    &+ \int \iprod{\omega \circ g_\theta}{(\nabla_z g_\theta \cdot e_1) \wedge \hdots \wedge (\nabla_z g_\theta \cdot e_k)} \, \dd \mu.
  \end{aligned}
  \label{eq:pd_final}
\end{equation}
Interestingly, due to the pullback, the discriminator $\omega$ inspects not only the output of the generator,
but also parts of its Jacobian matrix. As a remark, relations between the generator Jacobian and GAN performance
have recently been studied by \citet{odena2018generator}.

The constraints in \eqref{eq:primal_dual_problem} are implemented using penalty terms. First notice that due to the
definition of the comass norm \eqref{eq:comass}, the first constraint is equivalent to
imposing $|\iprod{\omega(x)}{v}| \leq \lambda$ for all simple $k$-covectors with $|v| = 1$.
We implement this with the a penalty term with parameter $\rho > 0$ as follows:
\begin{equation}
  \rho \cdot \int_\cX \int \max \{ 0, |\iprod{\omega(x)}{v}| - \lambda \}^2 \, \dd \gamma_{k,d}(v) \, \dd x,
  \label{eq:pen1}
\end{equation}
where $\gamma_{k,d}$ denotes the Haar measure on the Grassmannian manifold $\mathbf{Gr}(d, k) \subset \LM_k \R^d$ of $k$-dimensional subspaces in $\R^d$, see Chapter 3.2~in~\citet{KP08}.
Similarly, the constraint on the exterior derivative is implemented by
another penalty term as follows:
\begin{equation}
  \rho \cdot \int_\cX \int \max \{ 0, |\iprod{d \omega(x)}{v}| - 1 \}^2 \, \dd \gamma_{k+1,d}(v) \, \dd x.
  \label{eq:pen2}
\end{equation}

\subsection{Implementation with Deep Neural Networks}
For high dimensional practical problems it is completely infeasible to directly work with $\LM_k \R^d$ due to the curse of dimensionality. For example, already for the MNIST dataset augmented with two tangent vectors ($k = 2$, $d = 28^2$), we have
that $\dim(\LM_k \R^d) \approx 3 \cdot 10^5$. 

\setlength{\tabcolsep}{1pt}
\begin{figure}[t!]
  \centering
  \begin{center}
  \begin{tabular}{cc}
    {\includegraphics[width=0.49\linewidth]{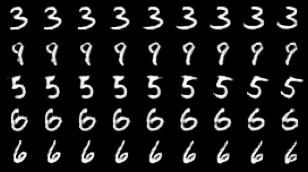}}
    & {\includegraphics[width=0.49\linewidth]{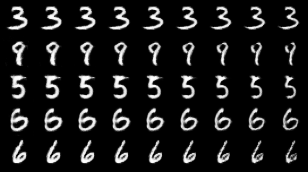}} \\
    varying $z_1$ (rotation)
    & varying $z_2$ (thickness)
  \end{tabular}
  \end{center}
  \caption{We show the effect of varying the first two components in $128$-dimensional
    latent space, corresponding to the two selected tangent vectors which are rotation and thickness. As seen in the figure, varying the corresponding latent representation yields an interpretable effect on the output, corresponding to the specified tangent direction.}
  \label{fig:exp_mnist}
\end{figure}

To overcome this issue,  we unfortunately have to resort to a few heuristic approximations. To that end, we first notice that in the formulations the dual variable $\omega : \R^d \to \LM^k \R^d$ only appears as an inner product with simple $k$-vectors, so we can implement it by implicitly describing its action, i.e., interpret it as a map $\omega : \R^d \times \LM_k \R^d \to \R$:
%which we only define
%on simple $k$-vectors by the following:
\begin{align}
  \label{eq:omega_struct}
  &\omega(x, v_1 \wedge \hdots \wedge v_k) \\
  &\quad =\omega^0(x) + \alpha \iprod{\omega^{1,1}(x) \wedge \hdots \wedge \omega^{1,k}(x)}{v_1 \wedge \hdots \wedge v_k}, \notag %\\
  %&\quad =\omega^0(x) + \alpha \det(\Omega^1(x)^\top V). \notag
\end{align}
Theoretically, the ``affine term'' $\omega^0(x)$ is not fully justified as the map does not describe an inner product on $\LM_k \R^d$ anymore,
but we found it to improve the quality of the generative model.
An attempt to justify this in the context of GANs is that the function $\omega^0 : \R^d \to \R$ is the usual ``discriminator''
while the $\omega^{1,i} : \R^d \to \R^d$ are combined to discriminate oriented tangent planes.

In practice, we parametrize $\omega^0$, $\omega^{1,i}$ using deep neural networks. For efficiency reasons, 
the networks share their parameters up until the last few layers.

The inner product in \eqref{eq:omega_struct} between the simple vectors is implemented by a $k \times k$-determinant, see \eqref{eq:det}. 
The reason we do this is to satisfy the properties of the Grassmann algebra \eqref{eq:gm1} -- \eqref{eq:gm2}.
This is important, since otherwise the ``discriminator'' $\omega$ could distinguish between different representations of the same oriented tangent plane.

\begin{figure*}[t!]
  \centering
  \begin{center}
    \begin{tabular}{ccc}
    {\includegraphics[width=0.33\linewidth]{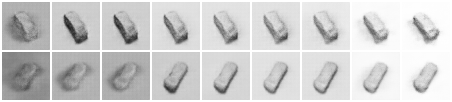}}
    &{\includegraphics[width=0.33\linewidth]{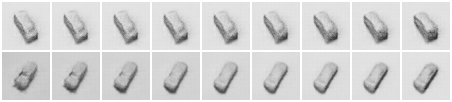}}
    &{\includegraphics[width=0.33\linewidth]{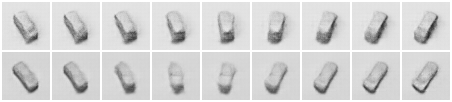}}\\
    varying $z_1$ (lighting)
    &varying $z_2$ (elevation)
    &varying $z_3$ (azimuth)
    \end{tabular}
  \end{center}
  \caption{From left to right we vary the latent codes in $[-1, 1]$ after training on the smallNORB dataset \cite{lecun2004}.} % Each row corresponds to different (random) remaining $125$ latent variables.}
  \label{fig:exp_smallnorb}
\end{figure*}

For the implementation of the penalty term \eqref{eq:pen2}, we use the definition of the exterior derivative \eqref{eq:ext_diff_lim} together 
with the ``approximate form'' \eqref{eq:omega_struct}. 
To be compatible with the affine term we use a seperate penalty on $\omega^0$, which we also found to give better results:
\begin{equation}
  \begin{aligned}
    &|d \omega(x, v_1 \wedge \hdots \wedge v_{k+1})| \approx (k + 1) \norm{\nabla_x \omega^0(x)} \\
    &\qquad + \alpha \left| \sum_{i=1}^{k+1} (-1)^{i-1} \nabla_x \det(W(x)^\top V_i) \cdot v_i \right|.
  \end{aligned}
  \label{eq:ext_diff_bound}
\end{equation}
In the above equation, $V_i \in \R^{d \times k}$ is the matrix with columns given by the vectors $v_1, \hdots, v_{k+1}$ but with $v_i$ omitted and
$W(x) \in \R^{d \times k}$ is the matrix with columns given by the $\omega^{1,i}(x)$. 
Another motivation for this implementation is, that in the case $k = 0$ the second term in \eqref{eq:ext_diff_bound} disappears and one recovers the well-known ``gradient penalty'' regularizer proposed by \citet{GAADC17}.

For the stochastic approximation of the penalty terms \eqref{eq:pen1} -- \eqref{eq:pen2} we sample from the Haar measure on the Grassmannian (i.e., taking random $k$-dimensional and $(k+1)$-dimensional subspaces in $\R^d$) by computing singular value decomposition of random $k \times d$ Gaussian matrices. Furthermore, we found it beneficial in practice to enforce the penalty terms only at the data points as for example advocated in the recent work \cite{mescheder}.
The right multiplied Jacobian vector products (also referred to as ``{\tt rop}'' in some frameworks) in \eqref{eq:ext_diff_bound} as well as in the loss function \eqref{eq:pd_final} are implemented using two additional backpropagations.

\begin{figure}[t]
  \centering
  \begin{center}
      {\includegraphics[width=0.99\linewidth]{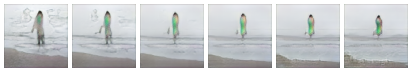}}\\[-0.14cm]
      {\includegraphics[width=0.99\linewidth]{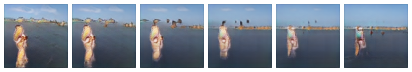}}\\
      varying $z_1$ (time)
  \end{center}
  \caption{Varying the learned latent representation of time. The model captures behaviours such as people walking on the beach, see also the results shown in Fig.~\ref{fig:teaser}.}
  \label{fig:videos}
\end{figure}

\section{Experiments}
\label{sec:exp}
The specific hyperparameters, architectures and tangent vector setups used in practice\footnote{See \url{https://github.com/moellenh/flatgan} for a {\tt PyTorch} implementation to reproduce Fig.~\ref{fig:exp2d} and Fig.~\ref{fig:exp_mnist}.}
 are detailed in Appendix~\ref{app:b}. 

\subsection{Illustrative 2D Example}
As a first proof of concept, we illustrate the effect of moving from $k = 0$ to $k = 1$ on
a very simple dataset consisting of five points on a circle. As shown in
Fig.~\ref{fig:exp2d}, for $k = 0$ (corresponding to a WGAN-GP formulation) varying the first latent
variable has no clear meaning. In contrast, with the proposed FlatGAN formulation, we
can specify vectors tangent to the circle from which the data is sampled.
This yields an interpretable latent representation that corresponds to an angular movement along the circle.
As the number of epochs is increasing, both formulations tend to concentrate most of the probability mass on the five 
data points. However, since $g_\theta : \cZ \to \cX$ is continuous by construction an interpretable path remains.

\subsection{Equivariant Representation Learning}
In Fig.~\ref{fig:exp_mnist} and Fig.~\ref{fig:exp_smallnorb} we show examples for $k = 2$ and $k = 3$
on MNIST respectively the smallNORB dataset of \citet{lecun2004}. For MNIST, we compute the tangent vectors
manually by rotation and dilation of the digits, similar as done by \citet{simard1992tangent,simard1998transformation}.
For the smallNORB example, the tangent vectors are given as differences between the corresponding images.
As observed in the figures, the proposed formulation leads to interpretable latent codes which behave
equivariantly with the generated images.
We remark that the goal was not to achieve state-of-the-art image
quality but rather to demonstrate that specifying tangent vectors 
yields disentangled representations.  As remarked by
\citet{spatial}, representing a 3D scene with a sequence of 2D
convolutions is challenging and a specialized architecture based on a
voxel representation would be more appropriate for the smallNORB example.

\subsection{Discovering the Arrow of Time}
In our last experiment, we set $k = 1$ and specify the tangent vector as
the difference of two neighbouring frames in video data. We train on the tinyvideo
beach dataset \cite{videoGAN}, which consists of more than 36 million frames.
After training for about half an epoch, we can already observe a learned latent representation of time,
see Fig.~\ref{fig:teaser} and Fig.~\ref{fig:videos}. 
We generate individual frames by varying the latent coordinate $z_1$ 
from $-12.5$ to $12.5$. 

Even though the model is trained on individual frames in 
random order, a somewhat coherent representation of time is discovered which captures 
phenomena such as ocean waves or people walking on the beach.

\section{Discussion and Conclusion}
\label{sec:discussion}
In this work, we demonstrated that $k$-currents can be used introduce a
notion of orientation into probabilistic models.
Furthermore, in experiments we have shown that specifying partial tangent information to
the data manifold leads to interpretable and equivariant latent representations such as 
the camera position and lighting in a 3D scene or the arrow of time in time series data.

The difference to purely unsupervised approaches such as InfoGAN or $\beta$-VAE 
is, that we can encourage potentially very complex latent representations to be learned. Nevertheless, an additional 
mutual information term as in \cite{CDHSSA16} can be directly added to the formulation so that some representations could be encouraged
through tangent vectors and the remaining ones are hoped to be discovered in an unsupervised fashion.

Generally speaking,
we believe that geometric measure theory is a rather underexploited field
with many possible application areas in probabilistic machine learning. We see this work
as a step towards leveraging this potential.

\section*{Acknowledgements} We thank Kevin R. Vixie for his detailed feedback and comments on the manuscript.
The work was partially supported by
the German Research Foundation (DFG); project 394737018 ``\emph{Functional Lifting 2.0 -- Efficient Convexifications for Imaging and Vision}''.

{
\bibliography{sections/reference.bib}

\begin{thebibliography}{52}
\providecommand{\natexlab}[1]{#1}
\providecommand{\url}[1]{\texttt{#1}}
\expandafter\ifx\csname urlstyle\endcsname\relax
  \providecommand{\doi}[1]{doi: #1}\else
  \providecommand{\doi}{doi: \begingroup \urlstyle{rm}\Url}\fi

\bibitem[Arjovsky et~al.(2017)Arjovsky, Chintala, and Bottou]{ACB17}
Arjovsky, M., Chintala, S., and Bottou, L.
\newblock {\color{myrefcolor} Wasserstein generative adversarial networks}.
\newblock In \emph{International Conference on Machine Learning}, 2017.

\bibitem[Beckmann(1952)]{beckmann1952continuous}
Beckmann, M.
\newblock {\color{myrefcolor} A continuous model of transportation}.
\newblock \emph{Econometrica: Journal of the Econometric Society}, pp.\
  643--660, 1952.

\bibitem[Bengio et~al.(2013)Bengio, Courville, and
  Vincent]{bengio2013representation}
Bengio, Y., Courville, A., and Vincent, P.
\newblock {\color{myrefcolor} Representation learning: A review and new
  perspectives}.
\newblock \emph{Transactions on Pattern Analysis and Machine Intelligence},
  35\penalty0 (8):\penalty0 1798--1828, 2013.

\bibitem[Bottou et~al.(2017)Bottou, Arjovsky, Lopez-Paz, and Oquab]{BALO17}
Bottou, L., Arjovsky, M., Lopez-Paz, D., and Oquab, M.
\newblock {\color{myrefcolor} Geometrical insights for implicit generative
  modeling}.
\newblock \emph{arXiv:1712.07822}, 2017.

\bibitem[Chen et~al.(2016)Chen, Duan, Houthooft, Schulman, Sutskever, and
  Abbeel]{CDHSSA16}
Chen, X., Duan, Y., Houthooft, R., Schulman, J., Sutskever, I., and Abbeel, P.
\newblock {\color{myrefcolor} {InfoGAN}: Interpretable representation learning
  by information maximizing generative adversarial nets}.
\newblock In \emph{Advances in Neural Information Processing Systems}, 2016.

\bibitem[Csisz{\'a}r et~al.(2004)Csisz{\'a}r, Shields,
  et~al.]{csiszar2004information}
Csisz{\'a}r, I., Shields, P.~C., et~al.
\newblock {\color{myrefcolor} Information theory and statistics: A tutorial}.
\newblock \emph{Foundations and Trends{\textregistered} in Communications and
  Information Theory}, 1\penalty0 (4):\penalty0 417--528, 2004.

\bibitem[de~Rham(1955)]{deRham}
de~Rham, G.
\newblock \emph{{\color{myrefcolor} Vari{\'e}t{\'e}s diff{\'e}rentiables,
  formes, courants, formes harmoniques}}, volume 1222.
\newblock Hermann, 1955.

\bibitem[Denton et~al.(2017)]{denton}
Denton, E.~L. et~al.
\newblock {\color{myrefcolor} Unsupervised learning of disentangled
  representations from video}.
\newblock In \emph{Advances in Neural Information Processing Systems}, 2017.

\bibitem[Federer(1969)]{federer}
Federer, H.
\newblock \emph{{\color{myrefcolor} Geometric Measure Theory}}.
\newblock Springer, 1969.

\bibitem[Federer \& Fleming(1960)Federer and Fleming]{federer1960normal}
Federer, H. and Fleming, W.~H.
\newblock {\color{myrefcolor} Normal and integral currents}.
\newblock \emph{Annals of Mathematics}, pp.\  458--520, 1960.

\bibitem[Fefferman et~al.(2016)Fefferman, Mitter, and
  Narayanan]{fefferman2016testing}
Fefferman, C., Mitter, S., and Narayanan, H.
\newblock {\color{myrefcolor} Testing the manifold hypothesis}.
\newblock \emph{Journal of the American Mathematical Society}, 29\penalty0
  (4):\penalty0 983--1049, 2016.

\bibitem[Feydy et~al.(2018)Feydy, S{\'e}journ{\'e}, Vialard, Amari, Trouv{\'e},
  and Peyr{\'e}]{feydy2018interpolating}
Feydy, J., S{\'e}journ{\'e}, T., Vialard, F.-X., Amari, S.-I., Trouv{\'e}, A.,
  and Peyr{\'e}, G.
\newblock {\color{myrefcolor} Interpolating between Optimal Transport and MMD
  using Sinkhorn Divergences}.
\newblock \emph{arXiv:1810.08278}, 2018.

\bibitem[Fraser et~al.(2003)Fraser, Hengartner, Vixie, and
  Wohlberg]{fraser2003incorporating}
Fraser, A.~M., Hengartner, N.~W., Vixie, K.~R., and Wohlberg, B.~E.
\newblock {\color{myrefcolor} Incorporating invariants in {M}ahalanobis
  distance based classifiers: Application to Face Recognition}.
\newblock In \emph{International Joint Conference on Neural Networks}, 2003.

\bibitem[Genevay et~al.(2017)Genevay, Peyr\'e, and Cuturi]{GPC17_view}
Genevay, A., Peyr\'e, G., and Cuturi, M.
\newblock {\color{myrefcolor} {GAN} and {VAE} from an optimal transport point
  of view}.
\newblock \emph{arXiv:1706.01807}, 2017.

\bibitem[Glaun{\`e}s et~al.(2008)Glaun{\`e}s, Qiu, Miller, and
  Younes]{glaunes2008large}
Glaun{\`e}s, J., Qiu, A., Miller, M.~I., and Younes, L.
\newblock {\color{myrefcolor} Large deformation diffeomorphic metric curve
  mapping}.
\newblock \emph{International Journal of Computer Vision (IJCV)}, 80\penalty0
  (3):\penalty0 317, 2008.

\bibitem[Goodfellow et~al.(2014)Goodfellow, Pouget-Abadie, Mirza, Xu,
  Warde-Farley, Ozair, Courville, and Bengio]{Goo+14}
Goodfellow, I.~J., Pouget-Abadie, J., Mirza, M., Xu, B., Warde-Farley, D.,
  Ozair, S., Courville, A., and Bengio, Y.
\newblock {\color{myrefcolor} Generative adversarial nets}.
\newblock In \emph{Advances in Neural Information Processing Systems}, 2014.

\bibitem[Gulrajani et~al.(2017)Gulrajani, Ahmed, Arjovsky, Dumoulin, and
  Courville]{GAADC17}
Gulrajani, I., Ahmed, F., Arjovsky, M., Dumoulin, V., and Courville, A.
\newblock {\color{myrefcolor} Improved training of {Wasserstein} {GANs}}.
\newblock \emph{arXiv:1704.00028}, 2017.

\bibitem[Higgins et~al.(2016)Higgins, Matthey, Pal, Burgess, Glorot, Botvinick,
  Mohamed, and Lerchner]{bVAE}
Higgins, I., Matthey, L., Pal, A., Burgess, C., Glorot, X., Botvinick, M.,
  Mohamed, S., and Lerchner, A.
\newblock {\color{myrefcolor} $\beta$--{VAE}: Learning basic visual concepts
  with a constrained variational framework}.
\newblock In \emph{International Conference on Learning Representations}, 2016.

\bibitem[Hinton et~al.(2011)Hinton, Krizhevsky, and Wang]{hinton}
Hinton, G.~E., Krizhevsky, A., and Wang, S.~D.
\newblock {\color{myrefcolor} Transforming auto-encoders}.
\newblock In \emph{International Conference on Artificial Neural Networks},
  2011.

\bibitem[Hubbard \& Hubbard(2015)Hubbard and Hubbard]{hubbard}
Hubbard, J.~H. and Hubbard, B.~B.
\newblock \emph{{\color{myrefcolor} Vector Calculus, Linear Algebra, and
  Differential Forms: A Unified Approach}}.
\newblock Matrix Editions, 2015.

\bibitem[Jaderberg et~al.(2015)Jaderberg, Simonyan, Zisserman, and
  Kavukcuoglu]{spatial}
Jaderberg, M., Simonyan, K., Zisserman, A., and Kavukcuoglu, K.
\newblock {\color{myrefcolor} Spatial transformer networks}.
\newblock In \emph{Advances in Neural Information Processing Systems}, 2015.

\bibitem[Kim \& Mnih(2018)Kim and Mnih]{factorVAE}
Kim, H. and Mnih, A.
\newblock {\color{myrefcolor} Disentangling by factorising}.
\newblock \emph{arXiv:1802.05983}, 2018.

\bibitem[Kingma \& Ba(2014)Kingma and Ba]{kingma2014adam}
Kingma, D.~P. and Ba, J.
\newblock {\color{myrefcolor} Adam: A method for stochastic optimization}.
\newblock \emph{arXiv:1412.6980}, 2014.

\bibitem[Kingma \& Welling(2014)Kingma and Welling]{KiWe14}
Kingma, D.~P. and Welling, M.
\newblock {\color{myrefcolor} Auto-encoding variational {Bayes}}.
\newblock \emph{arXiv:1312.6114}, 2014.

\bibitem[Krantz \& Parks(2008)Krantz and Parks]{KP08}
Krantz, S.~G. and Parks, H.~R.
\newblock \emph{{\color{myrefcolor} Geometric Integration Theory}}.
\newblock Birkh\"{a}user Boston, 2008.

\bibitem[LeCun et~al.(2004)LeCun, Huang, and Bottou]{lecun2004}
LeCun, Y., Huang, F.~J., and Bottou, L.
\newblock {\color{myrefcolor} Learning methods for generic object recognition
  with invariance to pose and lighting}.
\newblock In \emph{IEEE Conference on Computer Vision and Pattern Recognition},
  2004.

\bibitem[Li et~al.(2017)Li, Chang, Cheng, Yang, and P{\'o}czos]{li2017mmd}
Li, C.-L., Chang, W.-C., Cheng, Y., Yang, Y., and P{\'o}czos, B.
\newblock {\color{myrefcolor} MMD GAN: Towards deeper understanding of moment
  matching network}.
\newblock In \emph{Advances in Neural Information Processing Systems}, 2017.

\bibitem[Mathieu et~al.(2016)Mathieu, Zhao, Zhao, Ramesh, Sprechmann, and
  LeCun]{mathieu}
Mathieu, M.~F., Zhao, J.~J., Zhao, J., Ramesh, A., Sprechmann, P., and LeCun,
  Y.
\newblock {\color{myrefcolor} Disentangling factors of variation in deep
  representation using adversarial training}.
\newblock In \emph{Advances in Neural Information Processing Systems}, 2016.

\bibitem[Mescheder et~al.(2018)Mescheder, Geiger, and Nowozin]{mescheder}
Mescheder, L., Geiger, A., and Nowozin, S.
\newblock {\color{myrefcolor} Which training methods for {GAN}s do actually
  Converge?}
\newblock In \emph{International Conference on Machine Learning}, 2018.

\bibitem[Mirza \& Osindero(2014)Mirza and Osindero]{mirza2014conditional}
Mirza, M. and Osindero, S.
\newblock {\color{myrefcolor} Conditional generative adversarial nets}.
\newblock \emph{arXiv:1411.1784}, 2014.

\bibitem[M{\"o}llenhoff \& Cremers(2019)M{\"o}llenhoff and Cremers]{moellenh19}
M{\"o}llenhoff, T. and Cremers, D.
\newblock {\color{myrefcolor} Lifting vectorial variational problems: A natural
  formulation based on geometric measure theory and discrete exterior
  calculus}.
\newblock In \emph{IEEE Conference on Computer Vision and Pattern Recognition},
  2019.

\bibitem[Morgan(2016)]{morgan2016geometric}
Morgan, F.
\newblock \emph{{\color{myrefcolor} Geometric Measure Theory: A Beginner's
  Guide}}.
\newblock Academic Press, 5th edition, 2016.

\bibitem[Morgan \& Vixie(2007)Morgan and Vixie]{morgan2007}
Morgan, S.~P. and Vixie, K.~R.
\newblock {\color{myrefcolor} L$^1${TV} computes the flat norm for boundaries}.
\newblock In \emph{Abstract and Applied Analysis}, 2007.

\bibitem[Narayanaswamy et~al.(2017)Narayanaswamy, Paige, Van~de Meent,
  Desmaison, Goodman, Kohli, Wood, and Torr]{siddharth}
Narayanaswamy, S., Paige, T.~B., Van~de Meent, J.-W., Desmaison, A., Goodman,
  N., Kohli, P., Wood, F., and Torr, P.
\newblock {\color{myrefcolor} Learning disentangled representations with
  semi-supervised deep generative models}.
\newblock In \emph{Advances in Neural Information Processing Systems}, 2017.

\bibitem[Odena et~al.(2017)Odena, Olah, and Shlens]{odena2017}
Odena, A., Olah, C., and Shlens, J.
\newblock {\color{myrefcolor} Conditional image synthesis with auxiliary
  classifier {GAN}s}.
\newblock In \emph{International Conference on Machine Learning}, 2017.

\bibitem[Odena et~al.(2018)Odena, Buckman, Olsson, Brown, Olah, Raffel, and
  Goodfellow]{odena2018generator}
Odena, A., Buckman, J., Olsson, C., Brown, T.~B., Olah, C., Raffel, C., and
  Goodfellow, I.
\newblock {\color{myrefcolor} Is generator conditioning causally related to
  {GAN} performance?}
\newblock In \emph{International Conference on Machine Learning}, 2018.

\bibitem[Peyr\'e \& Cuturi(2018)Peyr\'e and Cuturi]{PeCu18}
Peyr\'e, G. and Cuturi, M.
\newblock {\color{myrefcolor} Computational optimal transport}.
\newblock \emph{arXiv:1803.00567}, 2018.

\bibitem[Pickup et~al.(2014)Pickup, Pan, Wei, Shih, Zhang, Zisserman,
  Sch\"olkopf, and Freeman]{pickup2014seeing}
Pickup, L.~C., Pan, Z., Wei, D., Shih, Y., Zhang, C., Zisserman, A.,
  Sch\"olkopf, B., and Freeman, W.~T.
\newblock {\color{myrefcolor} Seeing the arrow of time}.
\newblock In \emph{IEEE Conference on Computer Vision and Pattern Recognition},
  2014.

\bibitem[Radford et~al.(2015)Radford, Metz, and Chintala]{dcgan}
Radford, A., Metz, L., and Chintala, S.
\newblock {\color{myrefcolor} Unsupervised representation learning with deep
  convolutional generative adversarial networks}.
\newblock \emph{arXiv:1511.06434}, 2015.

\bibitem[Rezende et~al.(2014)Rezende, Mohamed, and
  Wierstra]{rezende2014stochastic}
Rezende, D.~J., Mohamed, S., and Wierstra, D.
\newblock {\color{myrefcolor} Stochastic backpropagation and approximate
  inference in deep generative models}.
\newblock \emph{arXiv:1401.4082}, 2014.

\bibitem[Rifai et~al.(2011)Rifai, Dauphin, Vincent, Bengio, and
  Muller]{rifai2011manifold}
Rifai, S., Dauphin, Y.~N., Vincent, P., Bengio, Y., and Muller, X.
\newblock {\color{myrefcolor} The manifold tangent classifier}.
\newblock In \emph{Advances in Neural Information Processing Systems}, 2011.

\bibitem[Santambrogio(2015)]{San15}
Santambrogio, F.
\newblock \emph{{\color{myrefcolor} Optimal Transport for Applied
  Mathematicians}}.
\newblock Birkh\"auser, New York, 2015.

\bibitem[Schmidhuber(1992)]{schmidhuber}
Schmidhuber, J.
\newblock {\color{myrefcolor} Learning factorial codes by predictability
  minimization}.
\newblock \emph{Neural Computation}, 4\penalty0 (6):\penalty0 863--879, 1992.

\bibitem[Schwartz(1951, 1957)]{schwartz}
Schwartz, L.
\newblock \emph{{\color{myrefcolor} Th{\'e}orie des distributions {I}, {II}}},
  volume 1245, 1122.
\newblock Hermann, 1951, 1957.

\bibitem[Simard et~al.(1992)Simard, Victorri, LeCun, and
  Denker]{simard1992tangent}
Simard, P., Victorri, B., LeCun, Y., and Denker, J.
\newblock {\color{myrefcolor} Tangent prop -- a formalism for specifying
  selected invariances in an adaptive network}.
\newblock In \emph{Advances in Neural Information Processing Systems}, 1992.

\bibitem[Simard et~al.(1998)Simard, LeCun, Denker, and
  Victorri]{simard1998transformation}
Simard, P.~Y., LeCun, Y.~A., Denker, J.~S., and Victorri, B.
\newblock {\color{myrefcolor} Transformation invariance in pattern recognition
  -- tangent distance and tangent propagation}.
\newblock In \emph{Neural networks: tricks of the trade}, pp.\  239--274, 1998.

\bibitem[Sriperumbudur et~al.(2012)Sriperumbudur, Fukumizu, Gretton,
  Sch{\"o}lkopf, Lanckriet, et~al.]{sriperumbudur2012empirical}
Sriperumbudur, B.~K., Fukumizu, K., Gretton, A., Sch{\"o}lkopf, B., Lanckriet,
  G.~R., et~al.
\newblock {\color{myrefcolor} On the empirical estimation of integral
  probability metrics}.
\newblock \emph{Electronic Journal of Statistics}, 6:\penalty0 1550--1599,
  2012.

\bibitem[Vaillant \& Glaun{\`e}s(2005)Vaillant and
  Glaun{\`e}s]{vaillant2005surface}
Vaillant, M. and Glaun{\`e}s, J.
\newblock {\color{myrefcolor} Surface matching via currents}.
\newblock In \emph{Biennial International Conference on Information Processing
  in Medical Imaging}, 2005.

\bibitem[Vixie et~al.(2010)Vixie, Clawson, Asaki, Sandine, Morgan, and
  Price]{vixie2010multiscale}
Vixie, K.~R., Clawson, K., Asaki, T.~J., Sandine, G., Morgan, S.~P., and Price,
  B.
\newblock {\color{myrefcolor} Multiscale flat norm signatures for shapes and
  images}.
\newblock \emph{Applied Mathematical Sciences}, 4\penalty0 (14):\penalty0
  667--680, 2010.

\bibitem[Vondrick et~al.(2016)Vondrick, Pirsiavash, and Torralba]{videoGAN}
Vondrick, C., Pirsiavash, H., and Torralba, A.
\newblock {\color{myrefcolor} Generating videos with scene dynamics}.
\newblock In \emph{Advances In Neural Information Processing Systems}, 2016.

\bibitem[Wei et~al.(2018)Wei, Lim, Zisserman, and Freeman]{wei2018learning}
Wei, D., Lim, J.~J., Zisserman, A., and Freeman, W.~T.
\newblock {\color{myrefcolor} Learning and using the arrow of time}.
\newblock In \emph{IEEE Conference on Computer Vision and Pattern Recognition},
  2018.

\bibitem[Whitney(1957)]{whitney1957geometric}
Whitney, H.
\newblock \emph{{\color{myrefcolor} Geometric Integration Theory}}.
\newblock Princeton University Press, 1957.

\end{thebibliography}
\bibliographystyle{icml2019}
}

\clearpage

\renewcommand\thesection{\Alph{section}}
{ \Large \textbf{Appendix} }
\setcounter{section}{0}
\section{Proof of Proposition~\ref{prop:cnt}}
\label{app:a}
Since $g_{\theta\sharp}S$ and $T$ are normal currents we know
$\bbF_\lambda(g_{\theta\sharp} S, T) < \infty$ for all $\theta \in \Theta$.

We now directly show Lipschitz continuity. First notice that
\begin{align}
  &\bbF_\lambda(g_{\theta\sharp} S - T) = \bbF_\lambda(g_{\theta\sharp} S + g_{\theta'\sharp} S - g_{\theta'\sharp} S- T) \\
  &\qquad \leq \bbF_\lambda(g_{\theta\sharp} S - g_{\theta' \sharp} S) + \bbF_\lambda(g_{\theta'\sharp} S - T),
\end{align}
yields the following bound:
\begin{equation}
  | \bbF_\lambda(g_{\theta\sharp} S - T) - \bbF_\lambda(g_{\theta' \sharp} S - T) | \leq \bbF_\lambda(g_{\theta\sharp} S - g_{\theta' \sharp}S).
  \label{eq:tri_ineq}
\end{equation}
Due to Prop.~\ref{prop:equiv} we have that 
\begin{equation}
  \bbF_\lambda(g_{\theta\sharp} S - g_{\theta' \sharp}S) \leq \max \{ 1, \lambda \} \cdot \bbF(g_{\theta\sharp} S - g_{\theta' \sharp}S).
  \label{eq:equiv_norm}
\end{equation}

Now define the compact set $C \subset \R^d$ as 
\begin{equation}
  \begin{aligned}
    C = \bigl \{ (1 - t) g_\theta(z) + t g_{\theta'}(z) : ~ &z \in \spt S, \\
    &0 \leq t \leq 1  \bigr \},
  \end{aligned}
\end{equation}
and as in §4.1.12 in \citet{federer} for compact $K \subset \R^d$
the ``stronger'' flat norm
\begin{equation}
  \begin{aligned}
    &\bbF_K(T)
    = \sup \bigl \{ T(\omega) ~|~ \omega \in \cD^k(\R^d), \text{ with } \\
    & \hspace{0.25cm} \norm{\omega(x)}^* \leq 1, \norm{d\omega(x)}^* \leq 1 \text{ for all } x \in K \bigr \}.
  \end{aligned}
  \label{eq:flatnorm_C}
\end{equation}

Since the constraint in the supremum in \eqref{eq:flatnorm_C} is less restrictive than in the definition of the flat norm \eqref{eq:flatnorm_sup}, we have
\begin{equation}
  \bbF(g_{\theta\sharp} S - g_{\theta' \sharp}S) \leq \bbF_C(g_{\theta\sharp} S - g_{\theta' \sharp}S).
  \label{eq:bound0}
\end{equation}

Then, the inequality after §4.1.13 in \citet{federer} bounds the right side of \eqref{eq:bound0} for $k > 0$ by
\begin{equation}
  \begin{aligned}
    &\bbF_C(g_{\theta\sharp} S - g_{\theta' \sharp}S) \leq \\
    &\quad \norm{S} (|g_{\theta} - g_{\theta'}| \rho^k) + \norm{\partial S} (|g_{\theta} - g_{\theta'}| \rho^{k-1}),
  \end{aligned}
  \label{eq:bound1}
\end{equation}
where $\rho(z) = \max \{ \norm{\nabla_z g(z, \theta)}, \norm{\nabla_z g(z, \theta')} \} < \infty$ due to Assumption~\ref{asm:1}
and we write $\norm{S}(f) = \int f(z) \, \mathrm{d} \norm{S}(z)$, where $\norm{S}$ is defined in the sense of \eqref{eq:polar}. For $k = 0$, a similar bound can be derived without the term $\norm{\partial S}$.

For $k > 0$, by setting $\mu_S = \norm{\partial S} + \norm{S}$
we can further bound the term in \eqref{eq:bound1} by
\begin{equation}
  \begin{aligned}
    &\norm{S} (|g_{\theta} - g_{\theta'}| \rho^k) + \norm{\partial S} (|g_{\theta} - g_{\theta'}| \rho^{k-1}) \leq \\
    &\qquad c_1 \cdot \int \norm{g_{\theta}(z) - g_{\theta'}(z)} \mathrm{d}\mu_S(z),
  \end{aligned}
  \label{eq:bound2}
\end{equation}
where $c_1 = \sup_z \max \{ \rho^k(z), \rho^{k-1}(z) \}$. For $k = 0$, the bound is derived analogously.

Now since $g(z, \cdot)$ is locally Lipschitz and $\Theta \subset \R^n$ is compact,
$g(z, \cdot)$ is Lipschitz and we denote the constant as $\Lip(g)$, leading to the bound
\begin{equation}
  \int \norm{g_{\theta}(z) - g_{\theta'}(z)} \mathrm{d}\mu_S(z) \leq \mu_S(\cZ) \Lip(g) \cdot \norm{\theta - \theta'}.
  \label{eq:bound3}
\end{equation}
Since $S \in \bN_{k, \cZ}(\R^l)$ is a normal current, $\mu_S(\cZ) < \infty$. Thus by combining \eqref{eq:tri_ineq}, \eqref{eq:equiv_norm}, \eqref{eq:bound0}, \eqref{eq:bound1}, \eqref{eq:bound2} and \eqref{eq:bound3} there is a finite $c_2 = \max\{1, \lambda \} \cdot c_1 \cdot \mu_S(\cZ) \cdot \Lip(g) < \infty$ such that
\begin{equation}
  | \bbF_\lambda(g_{\theta\sharp} S - T) - \bbF_\lambda(g_{\theta' \sharp} S - T) | \leq c_2 \norm{\theta - \theta'}.
\end{equation}
Therefore, the cost $\bbF_\lambda(g_{\theta\sharp} S, T)$ in \eqref{eq:primal_problem} is Lipschitz in $\theta$ and by Rademacher's theorem, §3.1.6 in \citet{federer}, also differentiable almost everywhere.

\section{Parameters and Network Architectures}
\label{app:b}
For all experiments we use Adam optimizer \cite{kingma2014adam}, with step size $10^{-4}$
and momentum parameters $\beta_1 = 0.5$, $\beta_2 = 0.9$. The batch size
is set to $50$ in all experiments except the first one (which runs full batch with batch size $5$). We always set $\lambda = 1$.

\subsection{Illustrative 2D Example}
We pick the same parameters for $k \in \{ 0, 1 \} $. We set the penalty to $\rho = 10$ and use $5$ discriminator updates per generator update as in \cite{GAADC17}.
The generator is a $5$ -- $6$ -- $250$ -- $250$ -- $250$ -- $2$ fully connected network
with leaky ReLU activations. The first layer ensures that the latent coordinate $z_1$
has the topology of a circle, i.e., it is implemented as $(\cos(z_1), \sin(z_1), z_2, z_3, z_4, z_5)$.
The discriminators $\omega^0$ and $\omega^{1,1}$ are $2$ -- $100$ -- $100$ -- $100$ -- $1$
respectively $2$ -- $100$ -- $100$ -- $2$ nets with leaky ReLUs.
The distribution on the latent is a uniform $z_1 \sim U([-\pi, \pi])$ and $z_{i} \sim \mathcal{N}(0, 1)$ for the remaining $4$ latent codes.

\subsection{MNIST}
For the remaining experiments, we use only $1$ discriminator update per iteration.
The digits are resized to $32 \times 32$. For generator we use DCGAN architecture \cite{dcgan} without batch norm and with ELU activations, see Table~\ref{tab:mnist_gen}.
\setlength{\tabcolsep}{8pt}
\begin{table}[h!]
  \centering
  \begin{center}
\begin{tabular}{ccc}
  \toprule
  layer name & output size & filters \\
  \midrule
  Reshape & $128 \times 1 \times 1$ & -- \\
  Conv2DTranspose & $32 F \times 4 \times 4$ & $128 \to 32 F$ \\
  Conv2DTranspose & $16 F \times 8 \times 8$ & $32 F \to 16 F$ \\
  Conv2DTranspose & $4 F \times 16 \times 16$ & $16 F \to 4 F$ \\
  Conv2DTranspose & $1 \times 32 \times 32$ & $4 F \to 1$\\
  \bottomrule
\end{tabular}
\end{center}
\caption{Generator architecture for MNIST experiment, $F = 32$.}
\label{tab:mnist_gen}
\end{table}
The discriminators are given by the architectures in Table~\ref{tab:mnist_disc}, with leaky ReLUs between the layers.
\definecolor{LightRed}{rgb}{1,0.7,0.7}
\definecolor{LightBlue}{rgb}{0.7,0.7,1}
\begin{table}[h!]
  \centering
  \begin{center}
\begin{tabular}{ccc}
  \toprule
  layer name & output size & filters \\
  \midrule
  Reshape & $1 \times 32 \times 32$ & -- \\
  Conv2D & $2F \times 16 \times 16$ & $1 \to 2F$ \\
  Conv2D & $4F \times 8 \times 8$ & $2F \to 4F$ \\
  Conv2D & $32F \times 4 \times 4$ & $4F \to 32F$ \\
  \rowcolor{LightRed}
  Conv2D & $1 \times 1 \times 1$ & $32F \to 1$\\
  \rowcolor{LightBlue}
  Conv2DTranspose & $1 \times 8 \times 8$ & $32F \to 1$\\
  \bottomrule
\end{tabular}
\end{center}
\caption{The discriminator $\omega^0$ has $F = 32$ and red last layer. The discriminators $\omega^{1,1}$, $\omega^{1,2}$ have $F = 8$ and last layer in blue.}
\label{tab:mnist_disc}
\end{table}

Before computing $\iprod{\omega^{1,1}(x) \wedge \omega^{1,2}(x)}{v_1 \wedge v_2}$, the tangent
images $v_1, v_2 \in \R^{32 \cdot 32}$ are convolved with a Gaussian with a standard deviation of $2$ and downsampled to $8 \times 8$
using average pooling. The distributions on the latent space are given by $z_1 \sim U([-7.5, 7.5])$, $z_2 \sim U([-0.5, 0.5])$ and $z_{i} \sim \mathcal{N}(0, 1)$ for the remaining $126$ latent variables. The tangent vectors at each sample are computed by a $2$ degree rotation and a dilation with radius one.

\subsection{SmallNORB}
We downsample the smallNORB images to $48 \times 48$. The architectures
and parameters are chosen similar to the previous MNIST example,
see Table~\ref{tab:norb_gen} and Table~\ref{tab:norb_disc}.

\setlength{\tabcolsep}{8pt}
\begin{table}[h!]
  \centering
  \begin{center}
\begin{tabular}{ccc}
  \toprule
  layer name & output size & filters \\
  \midrule
  Reshape & $128 \times 1 \times 1$ & -- \\
  Conv2DTranspose & $32 F \times 4 \times 4$ & $128 \to 32 F$ \\
  Conv2DTranspose & $16 F \times 8 \times 8$ & $32 F \to 16 F$ \\
  Conv2DTranspose & $16 F \times 12 \times 12$ & $16 F \to 16 F$ \\
  Conv2DTranspose & $4 F \times 24 \times 24$ & $16 F \to 4 F$ \\
  Conv2DTranspose & $1 \times 48 \times 48$ & $4 F \to 1$\\
  \bottomrule
\end{tabular}
\end{center}
\caption{Generator for smallNORB experiment, $F = 24$.}
\label{tab:norb_gen}
\end{table}

\begin{table}[h!]
  \centering
  \begin{center}
\begin{tabular}{ccc}
  \toprule
  layer name & output size & filters \\
  \midrule
  Reshape & $1 \times 48 \times 48$ & -- \\
  Conv2D & $2F \times 24 \times 24$ & $1 \to 2F$ \\
  Conv2D & $4F \times 12 \times 12$ & $2F \to 4F$ \\
  Conv2D & $32F \times 6 \times 6$ & $4F \to 32F$ \\
  \rowcolor{LightRed}
  Conv2D & $1 \times 1 \times 1$ & $32F \to 1$\\
  \rowcolor{LightBlue}
  Conv2DTranspose & $1 \times 12 \times 12$ & $32F \to 1$\\
  \bottomrule
\end{tabular}
\end{center}
\caption{SmallNORB discriminator $\omega^0$, $F = 32$, last layer in shown in red, and tangent discriminators $\omega^{1,1}$, $\omega^{1,2}$, $\omega^{1,3}$ where $F = 8$ and last layer is highlighted in blue.}
\label{tab:norb_disc}
\end{table}

\subsection{Tinyvideos}
The architectures for the tinyvideo experiment are borrowed from the recent work \citet{mescheder},
see Table~\ref{tab:tinyvid_gen} and Table~\ref{tab:tinyvid_disc}.

\setlength{\tabcolsep}{8pt}
\begin{table}[h!]
  \centering
  \begin{center}
\begin{tabular}{ccc}
  \toprule
  layer name & output size & filters \\
  \midrule
  Fully Connected & $8192$ & -- \\
  Reshape & $512 \times 4 \times 4$ & -- \\
  \midrule
  ResNet-Block & $512 \times 4 \times 4$ & $512 \to 512 \to 512$ \\
  NN-Upsampling & $512 \times 8 \times 8$ & -- \\
  \midrule
  ResNet-Block & $256 \times 8 \times 8$ & $512 \to 256 \to 256$ \\
  NN-Upsampling & $256 \times 16 \times 16$ & -- \\
  \midrule
  ResNet-Block & $128 \times 16 \times 16$ & $256 \to 128 \to 128$ \\
  NN-Upsampling & $128 \times 32 \times 32$ & -- \\
  \midrule
  ResNet-Block & $64 \times 32 \times 32$ & $128 \to 64 \to 64$ \\
  NN-Upsampling & $64 \times 64 \times 64$ & -- \\
  \midrule
  ResNet-Block & $64 \times 64 \times 64$ & $64 \to 64 \to 64$ \\
  Conv2D & $3 \times 64 \times 64$ & $64 \to 3$ \\
  \bottomrule
\end{tabular}
\end{center}
\caption{Generator architecture for tinyvideos experiment.}
\label{tab:tinyvid_gen}
\end{table}

\setlength{\tabcolsep}{8pt}
\begin{table}[h!]
  \centering
  \begin{center}
\begin{tabular}{ccc}
  \toprule
  layer name & output size & filters \\
  \midrule
  Conv2D & $64 \times 64 \times 64$ & $3 \to 64$ \\
  \midrule
  ResNet-Block & $64 \times 64 \times 64$ & $64 \to 64 \to 64$ \\
  AvgPool2D & $64 \times 32 \times 32$ & -- \\
  \midrule
  ResNet-Block & $128 \times 32 \times 32$ & $64 \to 64 \to 128$ \\
  AvgPool2D & $128 \times 16 \times 16$ & -- \\
  \midrule
  \rowcolor{LightRed}
  ResNet-Block & $256 \times 16 \times 16$ & $128 \to 128 \to 256$ \\
  \rowcolor{LightRed}
  AvgPool2D & $256 \times 8 \times 8$ & -- \\
  \midrule
  \rowcolor{LightRed}
  ResNet-Block & $512 \times 8 \times 8$ & $256 \to 256 \to 512$ \\
  \rowcolor{LightRed}
  AvgPool2D & $512 \times 4 \times 4$ & -- \\
  \midrule
  \rowcolor{LightRed}
  ResNet-Block & $1024 \times 4 \times 4$ & $512 \to 512 \to 1024$ \\
  \rowcolor{LightRed}
  Conv2D & $1 \times 1 \times 1$ & $1024 \to 1$ \\
  \midrule
  \rowcolor{LightBlue}
  ResNet-Block & $256 \times 16 \times 16$ & $128 \to 256 \to 256$ \\
  \rowcolor{LightBlue}
  Conv2D & $3 \times 16 \times 16$ & $256 \to 3$ \\
  \bottomrule
\end{tabular}
\end{center}
\caption{Discriminator architectures for tinyvideos experiment. Last layers of
  $\omega^0$ are highlighted in red, and the last layers of the temporal discriminator
  $\omega^{1,1}$ are highlighted in blue.}
\label{tab:tinyvid_disc}
\end{table}

\end{document}